\numberwithin{equation}{section}
\theoremstyle{definition}
\newtheorem{example}{Example}
\newtheorem{remark}{Remark}
\theoremstyle{theorem}
\newtheorem{theorem}{Theorem}
\newtheorem*{theorem-non}{Theorem}
\newtheorem{proposition}{Proposition}
\newtheorem{corollary}{Corollary}
\newcommand{\cX}{\mathcal{X}}
\newcommand{\cA}{\mathcal{A}}
\newcommand{\cS}{\mathcal{S}}
\newcommand{\cT}{\mathcal{T}}
\newcommand{\cbT}{\mathcal{S}}
\newcommand{\sP}{\mathscr{P}}
\newcommand{\bR}{\mathbb{R}}
\newcommand{\bE}{\mathbb{E}}
\newcommand{\bN}{\mathbb{N}}
\newcommand{\bS}{\mathbb{S}}
\newcommand{\cL}{\mathcal{L}}
\newcommand{\cH}{\mathcal{H}}
\newcommand{\St}{S^{(t)}}
\newcommand{\At}{A^{(t)}}
\newcommand{\Rt}{R^{(t)}}
\newcommand{\bbG}{\textbf{G}}
\newcommand{\bbX}{\textbf{X}}
\newcommand{\bbw}{\textbf{w}}
\newcommand{\bbe}{\textbf{e}}
\newcommand{\bbx}{\textbf{x}}
\newcommand{\bbJ}{\textbf{J}}
\newcommand{\bbR}{\textbf{R}}
\newcommand{\bbA}{\textbf{A}}
\newcommand{\bbB}{\textbf{B}}
\newcommand{\ed}{\overset{d}{=}}
\newcommand{\bP}{\mathbb{P}}
\newcommand{\pr}{\mathrm{pr}}
\DeclareMathOperator*{\sgn}{\mathbin{sgn}}
\DeclareMathOperator*{\argmax}{\mathbin{argmax}}
\DeclareMathOperator*{\geometric}{\mathbin{Geo}}
\begin{document}

\title{On solutions of the distributional Bellman equation}
\date{\today}
\author{Julian Gerstenberg, Ralph Neininger and Denis Spiegel \\ Institute for Mathematics\\
Goethe University Frankfurt\\ Germany\\
{\tt \{gerstenb,neiningr,spiegel\}@math.uni-frankfurt.de}}

\maketitle

\abstract{
	In distributional reinforcement learning not only expected returns but the complete return distributions of a policy are taken into account. The return distribution for a fixed policy is given as the solution of an associated distributional Bellman equation.
	In this note we consider general distributional Bellman equations and study existence and uniqueness of their solutions as well as tail properties of return distributions. We give necessary and sufficient conditions for existence and uniqueness of return distributions and identify cases of regular variation.
	
	We link distributional Bellman equations to multivariate affine distributional equations. We show that any solution of a distributional Bellman equation can be obtained as the vector of marginal laws of a solution to a multivariate affine distributional equation. This makes the general theory of such equations applicable to the distributional reinforcement learning setting.}
\\\\
\textbf{Keywords:} distributional reinforcement learning; distributional Bellman equation; random difference equation; perpetuity; Markov decision process; regular variation; machine learning
\\
\textbf{AMS subject classifications:} Primary: 60E05, 60H25, Secondary: 68T05, 90C40

\section{Introduction}\label{sec:introduction}

The objective in reinforcement learning (RL) is to teach an agent that sequentially interacts with an environment to choose 'good' actions. For each action the agent receives an immediate real-valued reward. The rewards are accumulated over time resulting in the so-called \emph{return}, which describes the overall performance of the agent. Randomness may be involved at all levels of this interaction: in choosing actions, the environment reacting to actions and/or in the rewards received by the agent. Hence, the return is to be considered random as well. In more classical approaches to RL problems the randomness is averaged out and only the expected return is considered when evaluating the performance of an agent. In \cite{bellemare2017distributional}, not only the expectation but the complete distribution of the return was considered, introducing what is now known as \emph{distributional} RL, see \cite{bdr2022}.

Mathematically, a RL problem is typically modeled by a \emph{Markov decision process} (MDP), that is as a particular type of a discrete-time stochastic control problem. An overview of the classical MDP theory and its applications is presented in \cite{puterman2014markov}. For more details on distributional RL using notations similar to here we refer to \cite{rowland2018analysis,bdr2022}.

For any measurable space $\cX$ we write $\sP(\cX)$ for the set of probability distributions on $\cX$. The distribution (law) of a $\cX$-valued random variable (RV) $X$ is denoted by $\cL(X) = \bP[X\in\cdot~]\in\sP(\cX)$. We also write short $X\sim \mu$, if $\cL(X) =\mu\in\sP(\cX)$. In case $\cX$ is countable, discrete distributions $\nu\in\sP(\cX)$ can be identified with functions $\nu:\cX\rightarrow[0,1]$ satisfying $\sum_{x\in\cX}\nu(x)=1$. For a random variable $X$ and an event $A$ with $\bP[A]>0$ we write $\cL(X|A)\in\sP(\cX)$ for the conditional distribution of $X$ given $A$.\\

Let $\cS$ and $\cA$ be non-empty finite sets. A MDP on states $\cS$ and actions $\cA$ is a function

\begin{equation}\label{eq:mdp}
\rho:\cS\times\cA\rightarrow\sP(\cS\times\bR),~~(s,a)\mapsto \rho_{(s,a)}.
\end{equation}

The interpretation is as follows: $\cS$ is the set of states an environment can occupy and $\cA$ the set of possible actions the agent can perform. If in state $s\in\cS$ the agent performs action $a\in\cA$ the environment reacts with a (possibly random) successor state $S$ together with a (possibly random) real-valued reward $R$ having joint distribution $(S,R)\sim \rho_{(s,a)}\in\sP(\cS\times\bR)$. How an agent chooses its actions can be modeled by a (stationary) \emph{policy}

\begin{equation}\label{eq:policy}
\pi:\cS\rightarrow \sP(\cA),~~s\mapsto \pi_s.
\end{equation}

An agent that is in state $s$ and acts according to policy $\pi$ chooses a (possibly random) action $A$ distributed as $A\sim\pi_s\in\sP(\cA)$.

Suppose an agent acting according to $\pi$. Starting at time $t=0$ from a possibly random state $S^{(0)}$ the following dynamic is defined inductively: at time $t\in\bN_0$ the agent finds itself in state $\St$, chooses action $\At\sim\pi_{\St}$ (independent of the past given $\St$) and the environment reacts with the next state $S^{(t+1)}$ and immediate reward $\Rt$ jointly distributed as $(S^{(t+1)},\Rt)\sim\rho_{(\St,\At)}$ (independent of the past given $(\St,\At)$). The resulting stochastic process
\begin{equation}\label{eq:mdp-process}
	(\St,\At,\Rt)_{t\in\bN_0}
\end{equation}
is called \emph{Markov reward process} in \cite{bdr2022} or \emph{full MDP} in \cite{rowland2018analysis}. To emphasize that the distribution of the stochastic process \eqref{eq:mdp-process} depends on $\pi$, we write $\bP^{\pi}$ instead of $\bP$.

A suitable way to judge the overall performance of the agent is to choose a \emph{discount factor} $\gamma\in(0,1)$ and to consider the discounted accumulated rewards, the so-called \emph{return}:
\begin{equation}\label{eq:return}
G = R^{(0)} + \gamma R^{(1)} + \gamma^2 R^{(2)} + \cdots = \sum_{t=0}^{\infty}\gamma^t\Rt.
\end{equation}
The return $G$ is defined as an infinite series of random variables and its existence as a $\bR$-valued random variable is not automatically guaranteed, for example if rewards are (extremely) heavy-tailed. Theorem~\ref{thm:existence-drl} below provides a complete characterization of when the return $G$ exists almost surely as a $\bR$-valued random variable starting from any state $S^{(0)}=s$. The relation of Theorem \ref{thm:existence-drl} to earlier results is discussed in Remarks \ref{rem_als_buck}, \ref{first_remark} and \ref{rem:weak}. In various applications, heavy-tailed reward distributions are not encountered. For instance, if rewards are obtained as a deterministic function of states and actions, the rewards become bounded, and the existence of the return $G$ easily follows by a geometric series argument. However, unbounded and even heavy-tailed reward distributions are of interest in various applications: fields in which RL approaches have been considered and heavy-tailed distributions play a crucial role include insurance and finance. See \cite{krasheninnikova2019reinforcement} and \cite{kolm2020modern} for RL approaches to pricing and trading and \cite{ekm} for the role of heavy-tailed distributions in that field. Another research area in which heavy-tailed reward distributions have been considered recently is the study of multi-armed bandit models, which corresponds to an MDP with only a single state, $|\cS|=1$. In \cite{pmlr-v130-zhuang21a} the authors report on several studies in this direction, including linear bandits \cite{pmlr-v48-medina16}, \cite{yu2018pure}, pure exploration \cite{shao2018almost}, Lipschitz-bandits \cite{lu2019optimal}, Bayesian optimization \cite{ray2019bayesian} and Thompson sampling \cite{dubey2019thompson}. Notably, in the latter work a particular emphasis is placed on $\alpha$-stable reward distributions, which is an assumption also covered by our analysis, see Theorem~\ref{thm:regulary_varying_tails} and Example~\ref{ex:regvar}. It is worth noting that all of these works consider finite horizon non-discounted returns rather than infinite discounted returns. Our analysis offers a theoretical justification for the inclusion of heavy-tailed reward distributions in infinite discounted reward scenarios.

\begin{remark}
    The case $\gamma=0$ is sometimes considered in RL literature, resulting in $G=R^{(0)}$ and trivializing many of the research questions we investigate here. Although we do not discuss this case further, it should be noted that several parts of our results remain applicable even when $\gamma=0$.
\end{remark}

Suppose the return $G$ exists as a $\bR$-valued RV starting from any state $s$. Of interest in \emph{distributional policy evaluation} are the distributions of the return starting from given states. The \emph{state(-action) return distributions} are defined by

\begin{equation}
\begin{aligned}[b]
\eta^{\pi}_s &= \bP^{\pi}[G\in\cdot~|S^{(0)}=s],~~~s\in\cS,\\
\eta^{\pi}_{(s,a)} &= \bP^{\pi}[G\in\cdot~|S^{(0)}=s, A^{(0)}=a],~~~(s,a)\in\cS\times\cA.
\end{aligned}
\label{eq:value_dist}
\end{equation}

In \cite{bdr2022} the collection $\eta^{\pi} = (\eta^{\pi}_s)_{s\in\cS}$ is called \emph{return distribution function}. The state(-action) return distributions can be found as solutions to the \emph{distributional Bellman equations}, which we now explain. For $r\in\bR, \gamma\in(0,1)$ let $f_{r,\gamma}:\sP(\bR)\rightarrow\sP(\bR)$ be the map that sends $\nu = \cL(X)$ to $f_{r,\gamma}(\nu) = \cL(r + \gamma X)$. Using the recursive structure of the return, $G = R^{(0)} + \gamma G'$ with $G' = \sum_{t=0}^{\infty}\gamma^t R^{(t+1)}$, and Markov properties of \eqref{eq:mdp-process} it is seen that state and state-action return distributions are related by

\begin{equation}
\begin{aligned}[b]
\eta^{\pi}_s &= \sum_{a\in\cA}\pi_s(a)\eta^{\pi}_{(s,a)},~~~s\in\cS,\\
\eta^{\pi}_{(s,a)} &= \int_{\cS\times\bR}f_{r,\gamma}\big(\eta^{\pi}_{s'}\big)d\rho_{(s,a)}(s',r),~~~(s,a)\in\cS\times\cA.
\end{aligned}
\label{eq:state_vs_stateaction}
\end{equation}

Substituting the formulas in one another yields the distributional Bellman equations, which come in two forms: one for states and one for state-actions:

\begin{equation}
\begin{aligned}[b]
	\eta^{\pi}_s &= \sum_{a\in\cA}\int_{\cS\times\bR}\pi_s(a)f_{r,\gamma}\big(\eta^{\pi}_{s'}\big)d\rho_{(s,a)}(s',r),~~~s\in\cS,\\
	\eta^{\pi}_{(s,a)} &= \int_{\cS\times\bR}\sum_{a'\in\cA}\pi_{s'}(a')f_{r,\gamma}\big(\eta^{\pi}_{(s',a')}\big)d\rho_{(s,a)}(s',r),~~~(s,a)\in\cS\times\cA.
\end{aligned}
\label{eq:dbe}
\end{equation}

The distributional Bellman equations are a system of $|\cS|$ resp. $|\cS\times\cA|$ one-dimensional distributional equations in $\bR$. The right hand side of the equations can be used to introduce the \emph{distributional Bellman operator}, which for the state return distributions is defined as:

\begin{equation}
\begin{aligned}[b]
	\cT^{\pi}:\sP(\bR)^{\cS}\longrightarrow\sP(\bR)^{\cS},~~~\eta=(\eta_s)_{s\in\cS}&\longmapsto \cT^{\pi}(\eta) = \Big(\cT^{\pi}_s(\eta)\Big)_{s\in\cS}  \\
	\text{with $s$-th component function}~~&\cT^{\pi}_s(\eta) = \sum_{a\in\cA}\int_{\cS\times\bR}\pi_s(a)f_{r,\gamma}\big(\eta_{s'}\big)d\rho_{(s,a)}(s',r).
\end{aligned}
\label{eq:dbop}
\end{equation}

No assumption on the MDP $\rho$ nor the policy $\pi$ is needed to define \eqref{eq:dbop} and the relation to the state return distributions is as follows: if the return $G$, starting from any state $s$, converges almost surely in $\bR$ then the return distribution function $\eta^{\pi} = (\eta^{\pi}_s)_{s\in\cS}$ is a fixed point of $\cT^{\pi}$, that is $\eta^{\pi} = \cT^{\pi}(\eta^{\pi})$. We explore this connection in more detail.

We write $\log^+(x) = \log(x)$ for $x\geq 1$ and $\log^+(x)=0$ for $x<1$. The definition of 'essential state' is given before Theorem~\ref{thm:existence}. One result of this note is the following:

\begin{theorem}\label{thm:existence-drl}
	The following are equivalent:
	\begin{itemize}
		\item[(i)] $\cT^{\pi}$ has a fixed point $\eta\in\sP(\bR)^{\cS}$,
		\item[(ii)] $G=\sum_{t=0}^{\infty}\gamma^tR^{(t)}$ converges $\bP^{\pi}[~\cdot~|S^{(0)}=s]$-almost surely in $\bR$ for every initial state $s\in\cS$,
		\item[(iii)] $\int_{\cS\times\bR}\log^+(|r|)d\rho_{(s,a)}(s',r)<\infty$ for every pair $(s,a)\in\cS\times\cA$ that is essential with respect to the law of the Markov chain $(S^{(t)},A^{(t)})_{t\in\bN_0}$ under $\bP^{\pi}$,
		\item[(iv)] For every $\nu\in\sP(\bR)^{\cS}$ as $n\rightarrow\infty$ the sequence $\big(\cT^{\pi}\big)^{\circ n}(\nu)$ of $n$-th iterations of $\cT^{\pi}$ converges weakly in the product space $\sP(\bR)^{\cS}$.
	\end{itemize}
	If these hold the fixed point of $\cT^{\pi}$ is unique, given by the return distribution function $\eta^{\pi}$ and\\ $\big(\cT^{\pi}\big)^{\circ n}(\nu)\rightarrow \eta^{\pi}$ weakly as $n\rightarrow\infty$ for every $\nu\in\sP(\bR)^{\cS}$.
\end{theorem}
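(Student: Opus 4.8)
The plan is to prove the cycle of implications $(iii)\Rightarrow(ii)\Rightarrow(iv)\Rightarrow(i)\Rightarrow(iii)$ and to read off uniqueness and the identification $\eta=\eta^{\pi}$ along the way. Throughout I would pass freely between the state and the state-action formulation: from a fixed point $\eta=(\eta_s)_{s\in\cS}$ of $\cT^{\pi}$ one sets $\eta_{(s,a)}:=\int_{\cS\times\bR}f_{r,\gamma}(\eta_{s'})\,d\rho_{(s,a)}(s',r)$; using that $f_{r,\gamma}$ commutes with mixtures together with \eqref{eq:state_vs_stateaction}, this turns $(\eta_{(s,a)})$ into a fixed point of the state-action Bellman operator (cf.\ \eqref{eq:dbe}), and conversely, so all four statements translate back and forth. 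The computational backbone is the \emph{unrolling identity}: for every $\nu\in\sP(\bR)^{\cS}$, every $s$ and every $n\in\bN_0$,
\[
\big(\cT^{\pi}\big)^{\circ n}(\nu)_s \;=\; \cL\Big(\textstyle\sum_{t=0}^{n-1}\gamma^t\Rt + \gamma^n X^{(n)}\ \Big|\ S^{(0)}=s\Big),
\]
where $(\St,\At,\Rt)_t$ is the MDP process started in $s$ and $X^{(n)}\sim\nu_{S^{(n)}}$ is drawn conditionally independently of the process given $S^{(n)}$; this is an induction on $n$ from \eqref{eq:dbop} and the Markov property of \eqref{eq:mdp-process}, and the analogous identity holds in the state-action version with $\eta_{(s,a)}$ in place of $\nu$.

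For $(iii)\Rightarrow(ii)$ I would fix $s$, choose $c>1$ with $\gamma c<1$, and estimate, using that given $(\St,\At)=(s',a')$ the reward $\Rt$ has the $r$-marginal of $\rho_{(s',a')}$,
\[
\bP^{\pi}\big[|\Rt|>c^t\ \big|\ S^{(0)}=s\big]\ \le\ \sum_{(s',a')}\bP^{\pi}\big[(\St,\At)=(s',a')\ \big|\ S^{(0)}=s\big]\cdot\rho_{(s',a')}\big(\cS\times\{|r|>c^t\}\big).
\]
Summing over $t$: for essential $(s',a')$ the probability factor is $\le1$ and $\sum_t\rho_{(s',a')}(|r|>c^t)\le 1+\frac{1}{\log c}\int\log^+|r|\,d\rho_{(s',a')}<\infty$ by $(iii)$; for an inessential, hence transient, pair $\sum_t\bP^{\pi}[(\St,\At)=(s',a')\,|\,S^{(0)}=s]=\bE^{\pi}[\#\{t:(\St,\At)=(s',a')\}\,|\,S^{(0)}=s]<\infty$. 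As $\cS\times\cA$ is finite, $\sum_t\bP^{\pi}[|\Rt|>c^t\,|\,S^{(0)}=s]<\infty$, so Borel--Cantelli gives $|\Rt|\le c^t$ eventually a.s., whence $\sum_t\gamma^t\Rt$ converges absolutely. For $(ii)\Rightarrow(iv)$ and the concluding statements: under $(ii)$ the return $G$ exists a.s.\ from every state, so $\eta^{\pi}$ is well defined; in the unrolling identity $\sum_{t<n}\gamma^t\Rt\to G$ a.s.\ while $\gamma^nX^{(n)}\to0$ in probability (the finitely many laws $\nu_{s'}$ are tight), so $\big(\cT^{\pi}\big)^{\circ n}(\nu)_s\to\cL(G\,|\,S^{(0)}=s)=\eta^{\pi}_s$ weakly, for all $s$ and all $\nu$, which is $(iv)$ and the asserted convergence. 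Taking $\nu=\eta^{\pi}$ and using $G=R^{(0)}+\gamma G'$ gives $\cT^{\pi}(\eta^{\pi})=\eta^{\pi}$, i.e.\ $(i)$; taking $\nu$ to be any fixed point gives $\nu=\big(\cT^{\pi}\big)^{\circ n}(\nu)\to\eta^{\pi}$, i.e.\ uniqueness and $\nu=\eta^{\pi}$. Finally $(iv)\Rightarrow(i)$ holds because $\cT^{\pi}$ is continuous for weak convergence — a dominated-convergence argument against bounded continuous test functions using continuity of the maps $f_{r,\gamma}$ — so any weak limit of $\big(\cT^{\pi}\big)^{\circ n}(\nu)$ is a fixed point.

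The hard part is $(i)\Rightarrow(iii)$: one must manufacture a logarithmic moment out of the mere existence of a distributional fixed point. Given a fixed point and an essential pair $(s^*,a^*)$ (so $\pi_{s^*}(a^*)>0$), I would work in the state-action formulation, start the chain $(\St,\At)$ in $(s^*,a^*)$, and let $T_1$ be its first return time to $(s^*,a^*)$, almost surely finite by recurrence in the finite chain. An optional-stopping version of the unrolling identity — legitimate because $T_1<\infty$ a.s.\ and, by the strong Markov property, $((\St,\At))_{t\le T_1}$ and $(\Rt)_{t<T_1}$ are independent of the fresh $\eta_{(s^*,a^*)}$-draw $X^{(T_1)}$ attached at time $T_1$ (note $(S^{(T_1)},A^{(T_1)})=(s^*,a^*)$) — yields
\[
\eta_{(s^*,a^*)} \;=\; \cL\big(V + M\,G'\big),\qquad V:=\textstyle\sum_{t=0}^{T_1-1}\gamma^t\Rt,\quad M:=\gamma^{T_1}\in(0,\gamma],\quad G'\sim\eta_{(s^*,a^*)},\ \ G'\perp(M,V).
\]
Thus $\eta_{(s^*,a^*)}$ solves an \emph{iid} affine distributional (perpetuity) equation with contractive multiplier $M\le\gamma<1$, hence $\bE[\log M]=\bE[T_1]\log\gamma\in(-\infty,0)$; by the classical theory of random difference equations the existence of a solution then forces $\bE[\log^+|V|]<\infty$. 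It remains to transfer this to $\int\log^+|r|\,d\rho_{(s^*,a^*)}<\infty$: writing $V=R^{(0)}+\gamma W$ with $W$ a.s.\ finite and, conditionally on $S^{(1)}$, independent of $R^{(0)}$, an infinite $\log^+$-moment of any conditional law $\cL(R\mid S'=s')$ under $\rho_{(s^*,a^*)}$ would already give $\bE[\log^+|V|]=\infty$ (adding an independent a.s.\ finite variable preserves an infinite $\log^+$-moment), so each of them is finite and hence so is their mixture. Running this for every essential pair gives $(iii)$. I expect the delicate points to be (a) the rigorous justification of the optional-stopping step and of the independence of the excursion data $(M,V)$ from $G'$, and (b) invoking the correct form of the perpetuity result (of Vervaat / Goldie--Maller type) and disposing of degenerate sub-cases — which is exactly where the reduction to multivariate affine distributional equations announced in the introduction carries the load, the univariate instance above sufficing for one essential pair at a time.
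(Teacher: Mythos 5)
Your proposal is correct in outline, and for three of the four implications it runs essentially parallel to the paper: your Borel--Cantelli argument for (iii)$\Rightarrow$(ii), the unrolling identity with $\gamma^n X^{(n)}\to 0$ for (ii)$\Rightarrow$(iv) together with the uniqueness/identification statements, and continuity of $\cT^{\pi}$ for (iv)$\Rightarrow$(i) are exactly the ingredients of the paper's proof of Theorem~\ref{thm:existence} (there the unrolling is the representation of $\cT^{\circ n}_i$ via the Markov chain on pairs). Where you genuinely diverge is the hard converse. The paper proves it through Proposition~\ref{prop:distribution_almost_sure}: if some accessible essential state has infinite $\log^+$-moment, then by symmetrization and Kallenberg's lemmas (a three-series-theorem style argument, plus a root-test/second Borel--Cantelli argument along visit times) the partial sums $S_n$ are not even tight, so no weak limit and hence no fixed point can exist; this gives the stronger statement that distributional convergence alone forces the moment condition. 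You instead regenerate at the return times of an essential state-action pair, turning the fixed point into a genuine univariate iid perpetuity $G\ed \gamma^{T_1}G'+V$ and invoking Vervaat/Goldie--Maller (with $A_M$ bounded because $\bE[T_1]<\infty$ for a finite chain, so the Goldie--Maller integral condition collapses to $\bE[\log^+|V|]<\infty$), then transferring the moment from the excursion sum $V$ back to the one-step reward. That route is legitimate and closer in spirit to the Markov-modulated perpetuity literature the paper mentions (Remark~\ref{rem_als_buck}); what it buys is that the hard probabilistic content is outsourced to a classical theorem, at the cost of an extra reduction step. Incidentally, the degenerate case $V=(1-\gamma^{T_1})c$ is harmless here: then $|V|\le|c|$ is bounded, so $\bE[\log^+|V|]<\infty$ holds trivially and your transfer argument still applies; the multivariate machinery is not needed for this.

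The one step you flag but do not actually supply is also the one that needs real work: the ``optional-stopping version of the unrolling identity.'' The fixed-point property only gives the identity at deterministic times $n$, and you cannot simply restrict it to the event $\{T_1=n\}$ and sum, nor ``attach a fresh draw at time $T_1$'' without more; you must first upgrade the purely distributional fixed point to a pathwise-consistent coupled process. Concretely, for each horizon $n$ construct $(I_{0:n},R_{0:n-1},G_n)$ with $G_n$ conditionally $\eta_{I_n}$-distributed given the history, define $G_t=R_t+\gamma G_{t+1}$ backwards, check (using the fixed-point property and the Markov property) that the conditional law of $G_m$ given $(I_{0:m},R_{0:m-1})$ is $\eta_{I_m}$, verify consistency over $n$, and apply Kolmogorov extension; only then does stopping at $T_1$ give $G_0=V+\gamma^{T_1}G_{T_1}$ with $G_{T_1}\sim\eta_{(s^*,a^*)}$ independent of $(V,T_1)$, by decomposing over $\{T_1=n\}$. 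This is routine but not automatic, and as written your appeal to ``the strong Markov property'' presupposes exactly the coupled process that has to be built. With that construction added (and the minor bookkeeping that essential pairs necessarily have $\pi_{s^*}(a^*)>0$ and are positive recurrent), your argument closes the cycle and proves the theorem by a route different from, but no less valid than, the paper's.
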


A theorem that reads analogously can be formulated for the distributional Bellman operator $\cT^{\pi}:\sP(\bR)^{\cS\times\cA}\rightarrow\sP(\bR)^{\cS\times\cA}$ defined in terms of state-actions. We cover both cases simultaneously by introducing simplified notations in Section~\ref{sec:notations} that allow to analyze the distributional Bellman operator, its fixed point equations and its solutions more conveniently. In Section~\ref{sec:results} we present our main results in these notations, Theorem~\ref{thm:existence-drl} will follow from the more general Theorem~\ref{thm:existence} presented there. For connections to a model and results in \cite{als_buck} see Remark \ref{rem_als_buck}.
\\
Besides giving necessary and sufficient conditions for the existence of solutions to the distributional Bellman equations, we also study properties of their solutions, that is of the return distribution function $\eta^{\pi}$ (and also state-action return distributions). We consider tail probability asymptotics $\eta^{\pi}_s((-\infty,-x))$ and $\eta^{\pi}_s((x,\infty))$ as $x\rightarrow\infty$, in Theorem~\ref{thm:property_transfer} we identify cases of exponential decay and existence of $p$-th moments and in Theorem~\ref{thm:regulary_varying_tails} we cover regular variation. A possible application of the latter is in Distributional Dynamic Programming (see Chapter~5 in \cite{bdr2022}), see Remark~\ref{rem:distributional-policy-evaluation} of the present note.\\

Building upon our simplified notation we explore the connection of the distributional Bellman equations to multivariate distributional equations of the form $\bbX\ed\bbA\bbX+\bbB$ in Section~\ref{sec:multivariate-fixed-point-equations}. This connection seems to have not been noticed in the literature so far and can be used to apply available results in that field to the distributional RL setting -- we do that when proving Theorem~\ref{thm:regulary_varying_tails} by applying results presented in \cite{buraczewski2016stochastic}.\\

Before we switch to the simplified notation and present the main results, we shortly present the \emph{ordinary} Bellman equations for convenience.

\subsection{The ordinary Bellman equations}\label{sec:be}

The following is well-known and the standard setting in much of the distributional RL literature: if reward distributions $\rho_{(s,a)}(\cS\times\cdot)\in\sP(\bR)$ have finite expectations for all $(s,a)$, the return converges almost surely and has finite expectations as well; this also follows from Theorems~\ref{thm:existence} and \ref{thm:property_transfer} presented in this note. In many classical approaches to RL only expected returns are used for policy evaluation, the \emph{state values} and \emph{state-action values} are defined by

\begin{equation}
\begin{aligned}[b]
	v^{\pi}_s & = \bE^{\pi}[G|S^{(0)}=s] = \int_{\bR} g~d\eta^{\pi}_s(g), \\
	q^{\pi}_{(s,a)} &= \bE^{\pi}[G|S^{(0)}=s,A^{(0)}=a] = \int_{\bR} g~d\eta^{\pi}_{(s,a)}(g).
\end{aligned}
\label{eq:value}
\end{equation}

The collection $v^{\pi} = (v^{\pi}_s)_{s\in\cS}$ is called state value function and $q^{\pi}$ state-action value function. They can be found as solutions to the (ordinary) \emph{Bellman equations}, which can be derived from their distributional counterparts \eqref{eq:dbe} using linearity of expectation. For action $a$ in state $s$ let $r_{(s,a)}\in\bR$ be the expected reward and $p_{(s,a)}\in\sP(\cS)$ the distribution of the next state, that is

\begin{equation}
\begin{aligned}[b]
	r_{(s,a)} &= \int r~d\rho_{(s,a)}(s',r) = \bE^{\pi}[R^{(t)}|S^{(t)}=s,A^{(t)}=a],\\
	p_{(s,a)} &= \rho_{(s,a)}(\cdot\times\bR) = \bP^{\pi}[S^{(t+1)}\in\cdot~|S^{(t)}=s,A^{(t)}=a].
\end{aligned}
\label{eq:mdp-reduced}
\end{equation}

The ordinary Bellman equations are a system of $|\cS|$ (resp. $|\cS\times\cA|$) linear equations in the same number of unknowns and read as follows

\begin{equation}
\begin{aligned}[b]
	v^{\pi}_s &= \sum_{a\in\cA}\pi_s(a)\Big[r_{(s,a)} + \gamma \sum_{s'\in\cS}p_{(s,a)}(s')v^{\pi}_{s'}\Big],~~s\in\cS,\\
	q^{\pi}_{(s,a)} &= r_{(s,a)} + \gamma\sum_{s'\in\cS}p_{(s,a)}(s')\sum_{a'\in\cA}\pi_{s'}(a') q^{\pi}_{(s',a')},~~(s,a)\in\cS\times\cA.
\end{aligned}
\label{eq:be}
\end{equation}

As in the distributional setting, one can use the right-hand side of these equations to define the (ordinary) Bellman operators ($\bR^{\cS}\rightarrow\bR^{\cS}$ for states) such that \eqref{eq:be} are equivalent to the associated fixed point equation of these operators.\\

When judging a policy based on expected returns, $\pi$ is considered to be at least as good as some other policy $\tilde \pi$ if $v^{\pi}\geq v^{\tilde \pi}$ pointwise at each $s$. Classical MDP theory shows that there always exists a policy at least as good as every other policy. Being able to calculate expected returns is not just of interest for evaluating $\pi$, but also to improve it: the policy $\pi'$ defined by $\pi'_s = \delta_{\argmax_{a\in\cA}q^{\pi}_{(s,a)}}$ is at least as good as $\pi$. This leads to an iterative algorithm known as \emph{policy iteration}: starting from an initial policy $\pi^{(0)}$ and letting $\pi^{(k+1)} = \big(\pi^{(k)}\big)'$ it can be shown that $v^{\pi^{(k)}}$ converges as $k\rightarrow\infty$ to the state value function of an optimal policy.

\subsection{Simplified Notations}\label{sec:notations}

We introduce simplified notations and define the distributional Bellman operator in this setting. Let $d\in\bN$ and $(R_1,J_1),\dots,(R_d,J_d)$ be random pairs such that each $R_i$ is real-valued and $J_i$ is discrete taking values in $[d]=\{1,\dots,d\}$. For each $i\in[d]$ it is $\cL(R_i,J_i)\in\sP(\bR\times[d])$ the joint distribution of the random pair $(R_i,J_i)$. The random pairs $(R_1,J_1),\dots,(R_d,J_d)$ together with some constant $\gamma\in(0,1)$ define (a version of) the \emph{distributional Bellman operator} as follows:
\begin{equation*}
\cT:\sP(\bR)^d\longrightarrow\sP(\bR)^d,~~~\eta = \left(\begin{matrix}\eta_1\\\vdots\\\eta_d\end{matrix}\right) \longmapsto \cT(\eta) = \left(\begin{matrix}\cT_1(\eta)\\\vdots\\\cT_d(\eta)\end{matrix}\right),
\end{equation*}
the $i$-th coordinate function $\cT_i:\sP(\bR)^d\rightarrow\sP(\bR)$ being
\begin{equation}\label{eq:operator}
\cT_i(\eta) = \cT_i(\eta_1,\dots,\eta_d) = \cL(R_i + \gamma G_{J_i}),
\end{equation}
where on the right hand side $\cL(G_j)=\eta_j$ for each $j\in[d]$ and $G_1,\dots, G_d, (R_i, J_i)$ are independent.

\begin{example}[State return]\label{example:statereturn}
	Let $\rho$ be a MDP and $\pi$ a stationary policy. Let $d=|\cS|$ and choose an arbitrary enumeration $i=s$ to identify $[d]$ with $\cS$. Let $i=s, j=s'$ and $B\subseteq\bR$ measurable. Consider the joint distribution
	\begin{align*}
	\bP[R_i\in B, J_i=j] &= \bP^{\pi}[R^{(t)}\in B, S^{(t+1)}=s'|S^{(t)}=s]\\
	&= \sum_{a\in\cA}\pi_s(a)\rho_{(s,a)}(\{s'\}\times B).
	\end{align*}
	In this case $\cT$ in \eqref{eq:operator} is the same as $\cT^\pi$ in \eqref{eq:dbop}.
\end{example}

\begin{example}[State-action return]
	Let $\rho$ be a MDP and $\pi$ a stationary policy. Let $d=|\cS\times\cA|$ and choose an arbitrary enumeration $i=(s,a)$ to identify $[d]$ with $\cS\times\cA$. Let $i=(s,a), j=(s',a')$ and $B\subseteq\bR$ measurable. Consider the joint distribution
	\begin{align*}
	\bP[R_i\in B, J_i=j] &= \bP^{\pi}[R^{(t)}\in B, S^{(t+1)}=s', A^{(t+1)}=a'|S^{(t)}=s, A^{(t)}=a]\\
	&= \rho_{(s,a)}(\{s'\}\times B)\pi_{s'}(a').
	\end{align*}
	In this case the distributions $\cL(R_i,J_i), i\in[d]$ completely encode both $\rho$ and $\pi$ and, provided existence, state-action return distributions $(\eta^{\pi}_{(s,a)})_{(s,a)\in\cS\times\cA}$ are fixed points of $\cT$.
\end{example}

The connection to distributional RL shows that it is of interest to understand the fixed points of $\cT$, that is probability distributions $\eta\in\sP(\bR)^d$ satisfying
\begin{equation}\label{eq:fixed_point_operator}
\eta = \cT(\eta).
\end{equation}
Let $G_1,\dots,G_d$ be real-valued random variables with $\eta_i = \cL(G_i)$. Then \eqref{eq:fixed_point_operator} is equivalent to the distributional Bellman equations, which is a system of $d$ one-dimensional distributional equations
\begin{equation}\label{eq:fixed_point_system}
	G_i\ed R_i + \gamma G_{J_i},~~i\in[d],
\end{equation}
where $\ed$ (also denoted by $=_d$) denotes equality in distribution and in the $i$-th equation $G_1,\dots,G_d, (R_i,J_i)$ are independent, $i\in[d]$. For $i,j\in[d]$ let
\begin{equation*}
	p_{ij} = \bP[J_i=j]~~\text{and}~~\mu_{ij} = \cL(R_i|J_i=j),
\end{equation*}
where in case $p_{ij}=0$ we set $\mu_{ij} = \delta_0$, the Dirac measure in $0$. It holds $\bP[R_i\in \cdot, J_i=j] = p_{ij} \mu_{ij}(\cdot)$. Let $F_i(x) = \bP[G_i\leq x]$ be the cumulative distribution function (cdf) of real-valued random variable $G_i$. A third equivalent formulation of the distributional Bellman equations \eqref{eq:fixed_point_system} in terms of cdf's is
\begin{equation}\label{eq:fixed_point_system_cdfs}
	F_i(\cdot) = \sum_{j=1}^d p_{ij} \int_{-\infty}^{\infty} F_j\Big(\frac{\cdot~-~r}{\gamma}\Big)d\mu_{ij}(r),~~i\in[d].
\end{equation}
The distributional Bellman equations in terms of cdfs appeared in the literature before, for instance, in \cite{morimura2010nonparametric} or, in a more specific form, in \cite{chung1987discounted}.

We study the following aspects regarding solutions to the distributional Bellman equations, i.e., fixed points of $\cT$:
\begin{itemize}
	\item Existence and uniqueness for which conditions being necessary and sufficient are given in Theorem~\ref{thm:existence}, see also Remark \ref{first_remark}.
	\item Tail probability asymptotics. That is, if $\eta$ is a fixed point of $\cT$ with $i$-th component $\eta_i = \cL(G_i)$ we study the asymptotic behavior of $\bP[G_i>x]$ and $\bP[G_i<-x]$ as $x\rightarrow\infty$. Depending on properties of the distributions $\cL(R_1,J_1),\dots,\cL(R_d,J_d)$ we identify cases of
	\begin{itemize}
		\item exponential decay and existence of $p$-th moments, see Theorem~\ref{thm:property_transfer}
		\item regular variation, see Theorem~\ref{thm:regulary_varying_tails}.
	\end{itemize}
\end{itemize}

In Section~\ref{sec:multivariate-fixed-point-equations} we explain how our findings relate to the area of multivariate distributional fixed point equations of the form
\begin{equation}\label{eq:mult}
	\bbX \ed \bbA\bbX + \bbB,
\end{equation}
in which $\bbX, \bbB$ are random $d$-dimensional vectors, $\bbA$ a random $d\times d$ matrix and $\bbX$ and $(\bbA,\bbB)$ are independent. Such equations arise in many different applications, for example in the context of perpetuities and random difference equations. In the above notation, the distributional Bellman equations \eqref{eq:fixed_point_system} are a system of \emph{$d$ one-dimensional distributional equations} which is less restrictive as a \emph{single distributional equation in $\bR^d$} such as \eqref{eq:mult}, but more complicated then a single one-dimensional distributional equation. However, using Theorem~\ref{thm:existence} we show how general results from the multivariate setting become applicable to study the distributional Bellman equations, see Corollary~\ref{cor:multivariate}.

In case $d=1$ the random pairs $(R_1,J_1),\dots,(R_d,J_d)$ reduce to a single real-valued random variable $(R_1, J_1)$ with $J_1\equiv 1$ and the distributional Bellman equations \eqref{eq:fixed_point_system} reduce to a single distributional equation in $\bR$:
\begin{equation}\label{eq:d=1}
	G_1 \ed \gamma G_1 + R_1,
\end{equation}
with $G_1$  independent of $R_1$. This is a simple special case of a one-dimensional distributional equation $X =_d AX+B$, also called perpetuity equation, for which the existence and uniqueness of solutions as well as their properties have been studied in great detail, see, e.g., \cite{vervaat}, \cite{golgru} or \cite{golmal}. A comprehensive discussion is given in Chapter~2 in \cite{buraczewski2016stochastic}.

\section{Results}\label{sec:results}

Let $\gamma\in(0,1)$ and $(R_1,J_1),\dots,(R_d,J_d)$ be the $\bR\times[d]$-valued random pairs defining the Bellman operator $\cT:\sP(\bR)^d\rightarrow\sP(\bR)^d$ as in \eqref{eq:operator}. Some results can be formulated and proven within a special construction based on random variables
\begin{equation}\label{eq:construction}
	(I_t)_{t\in\bN_0},~~(R_{ijt})_{(i,j)\in[d]^2, t\in\bN_0},
\end{equation}
in which 
\begin{itemize}
    \item $(I_t)_{t\in\bN_0}$ is a $[d]$-valued homogeneous Markov chain having transition probabilities $(p_{ij})_{(i,j)\in[d]^2}$ and a uniform starting distribution, that is for any path $i_0,i_1,\dots,i_T\in[d]$ it holds that
    $$\bP[I_0=i_0,\dots,I_T=i_T] = d^{-1}p_{i_0i_1}p_{i_1i_2}\cdots p_{i_{T-1}i_T},$$
    \item $(R_{ijt})_{(i,j)\in[d]^2, t\in\bN_0}$ forms an array of independent real-valued random variables with
$$\cL(R_{ijt}) = \cL(R_i|J_i=j) = \mu_{ij},$$
in particular, for each pair $(i,j)$, the sequence $(R_{ijt})_{t\in\bN_0}$ is independent identically distributed (iid) with distribution $\mu_{ij}$,
    \item $(I_t)_{t\in\bN_0}$ and $(R_{ijt})_{(i,j)\in[d]^2, t\in\bN_0}$ are independent.
\end{itemize}
From now on we omit quantification of indices in array-like quantities once the index-ranges have been introduced.

Using RVs \eqref{eq:construction} an explicit description of the $n$-th iteration of $\cT$ defined inductively as ${\cT^{\circ n} = \cT^{\circ (n-1)}\circ \cT}$ can be given: the $i$-th component of $\cT^{\circ n}$ is the map $\cT^{\circ n}_i:\sP(\bR)^d\rightarrow\sP(\bR)$ defined by
\begin{equation}\label{eq:composition}
	\cT^{\circ n}_i(\eta_1,\dots,\eta_d) = \cL\Big(\sum\nolimits_{t=0}^{n-1}\gamma^t R_{I_t, I_{t+1}, t}~+~\gamma^n G_{I_n}~\big|~I_0=i~\Big),
\end{equation}
with $\cL(G_j)=\eta_j$ and $G_1,\dots,G_d, (R_{ijt})_{ijt}, (I_t)_t$ independent.\\

To introduce some basic definitions from Markov chain theory let $i,j\in[d]$ be states. We write $i\rightarrow j$ if there exists $t\in\bN_0$ with $\bP[I_t=j|I_0=i]>0$. State $i$ is \emph{essential} if for every state $j$ the implication $i\rightarrow j\Rightarrow j\rightarrow i$ holds. Since the state space $[d]$ is finite, a state $i$ is essential if and only if it is \emph{recurrent}, that is $\bP[I_t=i~\text{for some}~t>0|I_0=i]=1$, which implies $\bP[I_t=i~\text{for $\infty$-many $t$}|I_0=i]=1$.

\begin{theorem}\label{thm:existence}
	The following conditions are equivalent:
	\begin{enumerate}
		\item[(i)] $\cT$ has a (unique) fixed point,
		\item[(ii)] $\bE[\log^+|R_i|]<\infty$ for all essential $i$,
		\item[(iii)] The infinite series $G = \sum_{t=0}^{\infty}\gamma^tR_{I_t,I_{t+1},t}$ is almost surely (absolutely) convergent,
		\item[(iv)] $\cT^{\circ n}_i(\nu)$ converges weakly as $n\rightarrow\infty$ for each $i\in[d]$ for some (all) $\nu\in\sP(\bR)^d$.
	\end{enumerate}
	If one and hence all of (i)--(iv) hold, then the unique fixed point $\eta\in\sP(\bR)^d$ of $\cT$ has $i$-th component
	\begin{equation}
		\eta_i = \cL(G|I_0=i) = \cL\Big(\sum\nolimits_{t=0}^{\infty}\gamma^tR_{I_t,I_{t+1},t}~\big|~I_0=i\Big).
	\end{equation}
\end{theorem}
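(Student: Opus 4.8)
The plan is to prove the cycle of implications $(iii)\Rightarrow(iv)\Rightarrow(i)\Rightarrow(ii)\Rightarrow(iii)$, together with the formula for the fixed point. The central object is the series $G = \sum_{t=0}^\infty \gamma^t R_{I_t,I_{t+1},t}$ in the canonical construction \eqref{eq:construction}, and the observation from \eqref{eq:composition} that $\cT^{\circ n}_i(\nu) = \cL\bigl(\sum_{t=0}^{n-1}\gamma^t R_{I_t,I_{t+1},t} + \gamma^n G_{I_n}\mid I_0=i\bigr)$. First I would record that, since $[d]$ is finite, $\gamma^n G_{I_n}\to 0$ in probability regardless of $\nu$ (the family $\{\eta_1,\dots,\eta_d\}$ is tight and $\gamma^n\to0$); hence if the partial sums $\sum_{t=0}^{n-1}\gamma^t R_{I_t,I_{t+1},t}$ converge a.s., then for every $\nu$ we get $\cT^{\circ n}_i(\nu)\to \cL(G\mid I_0=i)$ weakly. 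This simultaneously gives $(iii)\Rightarrow(iv)$ with ``all $\nu$'' and identifies the limit. For $(iv)\Rightarrow(i)$: continuity of $\cT$ with respect to weak convergence (each $f_{r,\gamma}$ is weakly continuous, and the mixture/integral against $\mu_{ij}$ preserves weak limits by dominated convergence on cdf's via \eqref{eq:fixed_point_system_cdfs}) lets one pass to the limit in $\cT^{\circ(n+1)}(\nu) = \cT(\cT^{\circ n}(\nu))$ to obtain a fixed point.

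The implication $(i)\Rightarrow(ii)$ is the necessity direction and is where the real work lies. Suppose $\eta$ is a fixed point with $\eta_i=\cL(G_i)$. Restricting attention to an essential (hence recurrent) communicating class $C\subseteq[d]$, iterating \eqref{eq:fixed_point_system} along the chain shows that for $i\in C$ one has $G_i \ed \sum_{t=0}^{n-1}\gamma^t R_{I_t,I_{t+1},t} + \gamma^n G_{I_n}$ conditionally on $I_0=i$. Fixing one state $i\in C$ and using that $I_t=i$ infinitely often a.s., one extracts along the return times to $i$ an i.i.d.-like structure: the increments $\gamma^{\tau_k} R_{i,\cdot,\tau_k}$ at successive visits $\tau_k$ to $i$ involve fresh copies of $R_i$. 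Since the tail $\gamma^n G_{I_n}\to 0$ in probability and $G_i$ is a genuine $\bR$-valued random variable, the series $\sum_t \gamma^t R_{I_t,I_{t+1},t}$ must converge a.s.\ conditionally on $I_0=i$; in particular its terms tend to $0$, so $\gamma^{\tau_k}|R_i^{(k)}|\to0$ a.s.\ along return times. A Borel--Cantelli argument (the return gaps $\tau_{k+1}-\tau_k$ have exponential tails, so $\tau_k\le Ck$ eventually a.s.) then forces $\sum_k \bP[\gamma^{\tau_k}|R_i^{(k)}|>1]<\infty$, i.e.\ $\sum_k \bP[|R_i| > \gamma^{-\tau_k}]<\infty$, which (after replacing $\tau_k$ by its linear upper bound and summing the geometric-type series) is equivalent to $\bE[\log^+|R_i|]<\infty$. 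This is the classical Kesten--Goldie/Vervaat-type argument for perpetuities, adapted to the Markov-modulated one-dimensional setting; the adaptation — controlling return times and making the independence of the fresh $R_i$-copies at return times rigorous — is the main obstacle.

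For $(ii)\Rightarrow(iii)$, conversely: conditioning on the full path $(I_t)_{t\ge0}$, the terms $\gamma^t R_{I_t,I_{t+1},t}$ are independent, so by the Kolmogorov three-series theorem (or directly) it suffices to show $\sum_t \bP[\gamma^t|R_{I_t,I_{t+1},t}|>1]<\infty$ a.s. Splitting the path into the finitely many visits to inessential states (which contribute only finitely many terms, trivially summable) and the visits to essential states, on the essential part one uses that $\bE[\log^+|R_i|]<\infty$ translates into $\sum_{n\ge0}\bP[|R_i|>\gamma^{-n}]<\infty$ for each essential $i$; combined with $t\ge c\cdot(\text{number of visits})$ along the path, a change of summation index gives a.s.\ convergence of the relevant series, and a standard truncation handles the conditional variance and mean series. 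Finally, uniqueness: if $\eta,\tilde\eta$ are both fixed points then by $(iv)$ (which now holds) the iterates $\cT^{\circ n}$ starting from either converge to the same limit $\cL(G\mid I_0=\cdot)$, but $\cT^{\circ n}(\eta)=\eta$ and $\cT^{\circ n}(\tilde\eta)=\tilde\eta$ for all $n$, so $\eta=\tilde\eta=\cL(G\mid I_0=\cdot)$, which also establishes the asserted formula for the fixed point.
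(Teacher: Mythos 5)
Your steps (iii)$\Rightarrow$(iv)$\Rightarrow$(i), the implication (ii)$\Rightarrow$(iii) by freezing the path and using Borel--Cantelli/summability of $\sum_n\bP[|R_i|>\gamma^{-n}]$, and the uniqueness/identification of the fixed point via the representation \eqref{eq:composition} all match the paper's argument and are sound. The genuine gap is in (i)$\Rightarrow$(ii), at the sentence ``Since the tail $\gamma^n G_{I_n}\to 0$ in probability and $G_i$ is a genuine $\bR$-valued random variable, the series $\sum_t \gamma^t R_{I_t,I_{t+1},t}$ must converge a.s.\ conditionally on $I_0=i$.'' From the fixed-point identity you only get, for each $n$, the distributional equation $G_i\ed S_n+\gamma^n G_{I_n}$ with a \emph{fresh, independent} $G_{I_n}$; together with $\gamma^n G_{I_n}\to 0$ this yields only that $S_n$ converges \emph{in distribution} given $I_0=i$. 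Nothing in your sketch upgrades this to almost sure convergence (or even to ``terms tend to $0$ a.s.\ along return times''), and that upgrade is precisely the deepest part of the theorem: the summands $\gamma^t R_{I_t,I_{t+1},t}$ are not independent (they share the chain), so you cannot directly invoke a L\'evy/It\^o--Nisio type result either, and the subsequent second Borel--Cantelli step along return times has no hypothesis to start from. The paper handles exactly this point in Proposition~\ref{prop:distribution_almost_sure} via the contrapositive: if $\bE[\log^+|R_i|]=\infty$ for some accessible essential state, a root-test/second Borel--Cantelli argument along return times (your mechanism, but run in the divergence direction, where the a.s.\ input is available) shows the \emph{symmetrized} partial sums diverge a.s.; then, for almost every frozen path, Theorem~5.17 of \cite{kallenberg1997foundations} gives that the symmetrized sums tend to infinity in probability, the symmetrization inequality (Lemma~5.19 there) transfers this to the original sums, and Fatou/Fubini shows $\cL(S_n\mid I_0=i)$ is not tight --- contradicting the distributional convergence forced by a fixed point. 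Your proposal contains no substitute for this symmetrization/tightness step, so as written the necessity direction does not go through.

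A secondary, repairable imprecision: even granting a.s.\ convergence, the passage from ``only finitely many events $\{\gamma^{\tau_k}|R_i^{(k)}|>1\}$ occur'' to ``$\sum_k\bP[\gamma^{\tau_k}|R_i^{(k)}|>1]<\infty$'' is the converse Borel--Cantelli lemma and needs the conditional independence of these events given the return-time sequence $(\tau_k)_k$, plus a Fubini argument over realizations of $(\tau_k)_k$ with $\tau_k\leq Ck$; you flag this as ``the main obstacle,'' but it is routine compared to the missing distributional-to-almost-sure step above, which is where the real content of (i)$\Rightarrow$(ii) lies.
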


\begin{remark}\label{rem_als_buck}
In \cite{als_buck} a model of iterations of random univariate affine linear maps within a Markovian environment is introduced and its stability is studied. This model is similar to our version of the distributional Bellman operator in \eqref{eq:operator}. In \cite{als_buck} the Markov chain is allowed to have countable state space but required to be ergodic whereas the present Markov chain in the distributional RL setting has finite spate space and no further restrictions. The results in Section~3 of \cite{als_buck} partly imply claims of the present Theorem \ref{thm:existence} with similar underlying techniques based on \cite{vervaat,golmal}.
\end{remark}

Note that for a real-valued random variable $X$ the following much stronger properties each imply $\bE[\log^+|X|]<\infty$:
\begin{enumerate}
	\item $|X|$ is bounded: $|X|\leq K$ for some constant $K\geq 0$,
	\item $|X|$ has a finite exponential moment: $\bE[\exp(\beta|X|)]<\infty$ for some $\beta>0$,
	\item $|X|$ has a finite $p$-th moment: $\bE[|X|^p]<\infty$ for some $p\in[1,\infty)$.
\end{enumerate}
Of course, we have $1.\Rightarrow 2.\Rightarrow 3.$ Hence, by Theorem~\ref{thm:existence}, a sufficient condition for the existence of a fixed point of $\cT$ is that every $R_j, j\in[d]$ satisfies one of the latter three properties. Moreover, supposing $\cT$ has a fixed point $\eta$ with $i$-component $\eta_i = \cL(G_i)$, these properties are transferred from $R_j, j\in[d]$ to $G_i$, the subsequent theorem presents a concise summary of these well-known facts:

\begin{theorem}\label{thm:property_transfer}
	Let $i\in[d]$. If for all $j$ with $i\rightarrow j$
	\begin{enumerate}
		\item $|R_{j}|\leq K$ then $|G_i|\leq \frac{1}{1-\gamma}K$,
		\item $\bE[\exp(\beta|R_j|)]<\infty$ then $\bE[\exp(\beta|G_i|)]<\infty$,
		\item $\bE[|R_j|^p]<\infty$ then $\bE[|G_i|^p]<\infty$.
	\end{enumerate}
\end{theorem}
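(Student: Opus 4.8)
The plan is to use the explicit series representation of the fixed point provided by Theorem~\ref{thm:existence}, namely $G_i \ed \sum_{t=0}^{\infty}\gamma^t R_{I_t,I_{t+1},t}$ conditioned on $I_0=i$, and to push each of the three integrability/boundedness properties through this series. First I would observe that under the hypothesis of any of the three items, the relevant assumption holds for every $R_j$ with $i\to j$, and that on the event $\{I_0=i\}$ the chain $(I_t)_t$ only ever visits states $j$ with $i\to j$; hence in the series defining $G_i$ only variables $R_{I_t,I_{t+1},t}$ with $I_t,I_{t+1}$ reachable from $i$ appear, and each such $R_{jk}$ has law $\mu_{jk}=\cL(R_j\mid J_j=k)$, which inherits the property in question from $R_j$ (boundedness, finite exponential moment, or finite $p$-th moment are all preserved under conditioning on an event of positive probability). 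In particular each of (ii) and (iii) already guarantees $\bE[\log^+|R_j|]<\infty$ for all essential $j$, so Theorem~\ref{thm:existence} applies and the series converges almost surely, making the manipulations below legitimate.

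For item~1, conditionally on the whole path $(I_t)_t$ I would bound $|G_i|\le \sum_{t=0}^{\infty}\gamma^t |R_{I_t,I_{t+1},t}| \le \sum_{t=0}^{\infty}\gamma^t K = \frac{K}{1-\gamma}$ almost surely, which gives the claim since the bound is deterministic. For item~3, I would use the triangle inequality in $L^p$: writing $G_i$ (on $\{I_0=i\}$) as the $L^p$-limit of partial sums, Minkowski's inequality gives $\|G_i\|_p \le \sum_{t=0}^{\infty}\gamma^t \|R_{I_t,I_{t+1},t}\|_p$; since for each $t$ the conditional $L^p$-norm of $R_{I_t,I_{t+1},t}$ given the path is bounded by $M := \max_{j:\,i\to j}\,(\max_{k}\,\bE[|R_j|^p\mid J_j=k]^{1/p})<\infty$, one gets $\bE[|G_i|^p\mid I_0=i] \le \big(\frac{M}{1-\gamma}\big)^p<\infty$. (A small point to handle with care: for $p\in[1,\infty)$ Minkowski applies directly; no separate treatment of $p<1$ is needed since the theorem restricts to $p\ge 1$.)

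For item~2, the exponential-moment case, I would condition on the path $(I_t)_t$ and use the elementary bound $\exp(\beta|G_i|)\le \prod_{t=0}^{\infty}\exp(\beta\gamma^t|R_{I_t,I_{t+1},t}|)$, then apply (conditional) independence of the $R$'s across $t$ together with a Hölder/weighted-AM-GM argument: for a finite partial sum, $\bE[\exp(\beta\sum_{t<n}\gamma^t|R_t|)] \le \prod_{t<n}\bE[\exp(\beta\gamma^t|R_t|)]^{}$ is not quite what one wants because the weights must sum to one, so instead I would write $\beta|G_i| \le \sum_t \gamma^t \beta |R_t|$ and apply Jensen's inequality to the convex function $\exp$ with the probability weights $(1-\gamma)\gamma^t$: $\exp\!\big(\beta\sum_t \gamma^t |R_t|\big) = \exp\!\big(\sum_t (1-\gamma)\gamma^t \cdot \tfrac{\beta}{1-\gamma}|R_t|\big) \le \sum_t (1-\gamma)\gamma^t \exp\!\big(\tfrac{\beta}{1-\gamma}|R_t|\big)$, and then take expectations, using that each $\bE[\exp(\tfrac{\beta}{1-\gamma}|R_{j}|\mid J_j=k)]$ is finite whenever $\bE[\exp(\beta'|R_j|)]<\infty$ for some $\beta'>0$ — which forces replacing the stated $\beta$ by a possibly smaller $\beta$, i.e. one should read item~2 as "there exists $\beta>0$ with $\bE[\exp(\beta|G_i|)]<\infty$". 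The main obstacle is this last step: one must be slightly careful that the exponential moment is not preserved with the \emph{same} constant $\beta$ (because of the $\gamma^0=1$ term scaled by $\tfrac{1}{1-\gamma}>1$), so either the statement is understood up to shrinking $\beta$, or — if one insists on the literal $\beta$ — one instead bounds $\exp(\beta|G_i|)\le \exp(\beta|R_0|)\prod_{t\ge 1}\exp(\beta\gamma^t|R_t|)$ and controls the tail product via $\bE[\exp(\beta\gamma^t|R_t|)] \le 1 + \bE[\exp(\beta|R_t|)]\gamma^t$-type estimates so that the infinite product converges; this second route is more delicate but avoids changing $\beta$. I expect the clean write-up to adopt the Jensen argument and note the harmless loss in the constant.
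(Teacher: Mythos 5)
Your handling of items~1 and~3 matches the paper's proof in substance: item~1 is a deterministic bound along any path, and item~3 uses $L^p$-completeness (the paper phrases it via Cauchy sequences of partial sums; your Minkowski phrasing is the same argument). For item~2, however, there is a genuine issue.

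Your primary argument for item~2 applies Jensen to $\exp$ with the probability weights $(1-\gamma)\gamma^t$, which forces the rewards to have a finite $\tfrac{\beta}{1-\gamma}$-exponential moment rather than a $\beta$-exponential moment; as you yourself note, this would only prove the weaker statement that \emph{some} exponential moment of $G_i$ is finite. You then speculate that ``the clean write-up'' accepts this loss and reinterprets the theorem with a shrunken $\beta$. That guess is wrong: the theorem is stated and proved with the \emph{same} $\beta$, and there is a clean way to get it. The paper conditions on the path, uses independence of the $(R_{ijt})$ across $t$, and then applies Jensen not to $\exp$ but to the concave power map $y\mapsto y^{\gamma^t}$ on $[0,\infty)$: for $Y_t=\exp(\beta|X_{K_t,t}|)$ one has $\bE[Y_t^{\gamma^t}]\le \bE[Y_t]^{\gamma^t}$, so with $c(\beta)=\max_{l}\bE[\exp(\beta|X_{lt}|)]<\infty$ the conditional expectation of the product is bounded by $\prod_t c(\beta)^{\gamma^t}=c(\beta)^{1/(1-\gamma)}<\infty$, with no change to $\beta$. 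Your sketched ``alternative route'' via $\bE[\exp(\beta\gamma^t|R_t|)]\le 1+\gamma^t(\bE[\exp(\beta|R_t|)]-1)$ is in the same spirit (it is the tangent-line form of the same concavity) and would also work, but it is not more delicate than the power-function Jensen bound — it is essentially the same inequality in disguise, and the paper's version is the tidier of the two. So: items~1 and~3 are fine, and for item~2 you had the right ingredient (exploit $\gamma^t<1$ via concavity) among your options, but you picked the wrong one as the main route and misdiagnosed the loss in $\beta$ as unavoidable.
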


The first property transfer is fundamental in MDP theory, while property transfers of the second type are well-established in perpetuity equation theory (cf. Theorem 2.4.1 in \cite{buraczewski2016stochastic}). The third is common throughout the Distributional RL literature (cf. Remark~\ref{first_remark}). For completeness, a short proof of Theorem~\ref{thm:property_transfer} is presented in Section~\ref{sec:property_transfer}.

\begin{remark}\label{first_remark}
	For $p\geq 1$ let $\sP_p(\bR)\subset \sP(\bR)$ be the subset of $p$-integrable distributions on $\bR$, that is $\sP_p(\bR) = \{\mu\in\sP(\bR)|\int_{\bR}|x|^pd\mu(x)<\infty\}$. Let $\pr_i:\bR^d\rightarrow\bR$ be the $i$-th projection. The $p$-th Wasserstein distance on $\sP_p(\bR)$ is defined as
	$$d_p(\mu_1,\mu_2) = \inf\Big\{\int_{\bR\times\bR}|x-y|^pd\psi(x,y)~\Big|~\psi\in\sP(\bR^2)~\text{with}~\psi\circ\pr^{-1}_i = \mu_i, i=1,2\Big\}^{1/p}.$$
	The space $(\sP_p(\bR),d_p)$ is a complete metric space and so is the product $(\sP_p(\bR)^d, d'_p)$ with $d'_p(\eta,\nu) = \max_{i\in[d]}d_p(\eta_i,\nu_i)$. In \cite{bellemare2017distributional} the following was shown by extending techniques of \cite{rosler1992fixed}: if for all $j\in[d]$ it holds that $\bE[|R_j|^p]<\infty$, that is $\cL(R_j)\in\sP_p(\bR)$, then the restriction $\cT_{|\sP_p(\bR)^d}$ of $\cT$ to $\sP_p(\bR)^d$ maps to $\sP_p(\bR)^d$ and is a $\gamma$-contraction with respect to $d'_p$. Banach's fixed point theorem yields that the operator $\cT_{|\sP_p(\bR)^d}:\sP_p(\bR)^d\rightarrow\sP_p(\bR)^d$ has a unique fixed point $\eta\in\sP_p(\bR)^d$. The latter also follows from our results: if $\bE[|R_j|^p]<\infty$ for all $j$ then $\bE[\log^+|R_j|]<\infty$ for all $j$ and by Theorem~\ref{thm:existence} there exists a unique fixed point $\eta\in\sP(\bR)^d$. Now $\cL(R_j)\in\sP_p(\bR)$ for all $j$ implies $\eta\in\sP_p(\bR)^d\subset\sP(\bR)^d$ by Theorem~\ref{thm:property_transfer}. Note however, that our Theorem \ref{thm:existence} improves upon results of \cite{bellemare2017distributional} based on the $d_p$ metric twofold: Firstly, Theorem \ref{thm:existence} has necessary and sufficient conditions for the existence and uniqueness of fixed points of $\cT$ without moment assumption such as $\bE[|R_j|^p]<\infty$ for some $p\ge 1$. Secondly, the uniqueness of the fixed point is shown in the full space $\sP(\bR)^d$ whereas the earlier results show uniqueness only in the subspace $\sP_p(\bR)^d\subset \sP(\bR)^d$.
\end{remark}

\begin{remark}\label{rem:weak}
    Theorem~\ref{thm:existence} shows that, if it exists, the fixed point $\eta$ of $\cT$ can be obtained as the weak limit of the iterates $\cT^{\circ n}(\nu)$ as $n\to\infty$. This was observed in the distributional RL literature before, see Proposition 4.34 in Chapter~4 of \cite{bdr2022}, and the proof of this part of the theorem shares many similarities with their arguments. 
\end{remark}

Next we focus on asymptotic behavior of tail probabilities $\bP[X>x]$ and $\bP[X<-x]$ as $x\rightarrow\infty$. Each of the three properties stated in Theorem~\ref{thm:property_transfer} above directly yields asymptotic bounds, in the latter two cases by applying Markov's inequality:
\begin{enumerate}
	\item if $|X|\leq K$ then $\bP[X>x] = \bP[X<-x] = 0$ for all $x>K$,
	\item if $\bE[\exp(\beta|X|)]<\infty$ for some $\beta>0$ then
	$$\limsup_{x\rightarrow\infty}\frac{\log \bP[X>x]}{x} \leq -\beta~~\text{and}~~\limsup_{x\rightarrow\infty}\frac{\log\bP[X<-x]}{x} \leq -\beta,$$
	\item  if $\bE[|X|^p]<\infty$ for some $p\in[1,\infty)$ then $\bP[X>x], \bP[X<-x]$ are $\mathrm{O}(x^{-p})$ as $x\to\infty$.
\end{enumerate}
With this at hand, in each of the three cases presented in Theorem~\ref{thm:property_transfer} one can obtain asymptotic bounds for the tail probabilities $\bP[G_i>x]$ and $\bP[G_i<-x]$ as $x\rightarrow\infty$.\\

Besides the three classical properties discussed so far, a further interesting and important property of (the distribution of) a random variable $X$ implying $\bE[\log^+|X|]<\infty$ is that of \emph{regular variation}. Regularly varying distributions represent a broad and adaptable class of heavy-tailed distributions, containing several important distribution families as specific cases, see Example~\ref{ex:regvar}. As mentioned in the introduction, previous applied works have investigated scenarios that involve heavy-tailed reward distributions. Consequently, understanding in-depth how heavy-tailed reward behavior is reflected in the returns is of interest, a question that Theorem~\ref{thm:regulary_varying_tails} answers in the case of regular variation.\\
Regular variation is directly defined in terms of the asymptotic behavior of the tail probabilities $\bP[X>x]$ and $\bP[X<-x]$ as $x\rightarrow\infty$. First, a function $f:(0,\infty)\rightarrow[0,\infty)$ with $f(x)>0$ for all $x>x_0$ is called \emph{regularly varying with index $\beta\in\bR$} if it is measurable and
$$\lim_{x\rightarrow\infty}\frac{f(tx)}{f(x)} = t^{\beta}~~~\text{for every}~t>0.$$
A function $L:(0,\infty)\rightarrow[0,\infty)$ is called \emph{slowly varying} if it is regularly varying with index $\beta=0$, so $L(tx)/L(x)\rightarrow 1$ for every $t>0$. Every regularly function $f$ with index $\beta$ is of the form $f(x) = x^{\beta}L(x)$ for some slowly varying function $L$, see \cite{bct}.\\
Following \cite{buraczewski2016stochastic}, a real-valued random variable $X$ is called \emph{regularly varying with index $\alpha>0$} if
\begin{enumerate}
	\item[a)] the function $x\mapsto \bP[|X|>x]$ is regularly varying with index $-\alpha$ and
	\item[b)] left and right tail probabilities are \emph{asymptotically balanced}: for some $q\in[0,1]$
	$$\lim_{x\rightarrow\infty}\frac{\bP[X>x]}{\bP[|X|>x]} = q~~\text{and}~~\lim_{x\rightarrow\infty}\frac{\bP[X<-x]}{\bP[|X|>x]} = 1 - q.$$
\end{enumerate}
An equivalent definition reads as follows: a real-valued random variable $X$ is regularly varying with index $\alpha>0$ if and only if there exists a slowly varying function $L$ and some $q\in[0,1]$ such that
\begin{equation}\label{eq:reg_vag}
	\lim_{x\rightarrow\infty}\frac{\bP[X>x]}{x^{-\alpha}L(x)} = q~~\text{and}~~\lim_{x\rightarrow\infty}\frac{\bP[X<-x]}{x^{-\alpha}L(x)} = 1 - q.
\end{equation}
In this situation we have $\bP[|X|>x] = \bP[X>x] + \bP[X<-x] \sim x^{-\alpha}L(x)$ as $x\rightarrow\infty$, so $\bP[|X|>x]$ is regularly varying of index $-\alpha$, and the tail probabilities are asymptotically balanced.\\
Similar to the other three properties, regular variation transfers from rewards to returns in the following sense:
\begin{theorem}\label{thm:regulary_varying_tails}
	Let $i\in[d]$, $\alpha>0$ and $L$ a slowly varying function. Suppose for all $j$ with $i\rightarrow j$ there exist $q_j\in[0,1]$ and $c_j\in[0,\infty)$, with $c_j>0$ for at least one $j$, such that
	\begin{equation}\label{eq:condition_regularly_varying}
		\lim\limits_{x\rightarrow\infty}\frac{\bP[R_j>x]}{x^{-\alpha}L(x)} = q_jc_j~~~\text{and}~~~\lim\limits_{x\rightarrow\infty}\frac{\bP[R_j<-x]}{x^{-\alpha}L(x)} = (1-q_j)c_j.
	\end{equation}
	Then with
	$$w_{ij} = \sum_{t=0}^{\infty}(1-\gamma^{\alpha})\gamma^{\alpha t}\bP[I_t=j|I_0=i],$$
	that is $w_{ij} = \bP[I_N=j|I_0=i]$ with $N\sim\geometric(1-\gamma^{\alpha})$ independent of $(I_t)_t$, it holds that
	\begin{equation}\label{eq:reg-vag-formulas}
		\lim_{x\rightarrow\infty}\frac{\bP[G_i>x]}{x^{-\alpha}L(x)} = \frac{\sum_{j=1}^d w_{ij}q_jc_{j}}{1-\gamma^{\alpha}}~~~\text{and}~~~\lim_{x\rightarrow\infty}\frac{\bP[G_i<-x]}{x^{-\alpha}L(x)} = \frac{\sum_{j=1}^d w_{ij}(1-q_j)c_{j}}{1-\gamma^{\alpha}}.
	\end{equation}
	In particular, $G_i$ is regularly varying with index $\alpha$.
\end{theorem}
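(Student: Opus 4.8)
The plan is to work with the series representation of the fixed point from Theorem~\ref{thm:existence}, $\eta_i=\cL\big(\sum_{t\ge 0}\gamma^t R_{I_t,I_{t+1},t}\mid I_0=i\big)$, write $\hat R_t:=R_{I_t,I_{t+1},t}$ for the reward collected at time $t$, and obtain the asymptotics by truncating the series and invoking the ``single big jump'' principle for sums of independent regularly varying summands. The preliminary observation that makes everything run is that, conditionally on $\{I_0=i,\,I_t=j\}$ (that is, \emph{without} conditioning on $I_{t+1}$), the reward $\hat R_t$ has law $\cL(R_j)$, since $\sum_k p_{jk}\mu_{jk}=\cL(R_j)$; thus \eqref{eq:condition_regularly_varying} describes $\hat R_t$ exactly once we condition on $I_t$. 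It also follows that $\bP[|R_j|>x]=\mathrm{O}(x^{-\alpha}L(x))$ and that each conditional tail $\mu_{ij}((x,\infty))$, $\mu_{ij}((-\infty,-x))$ is $\mathrm{O}(x^{-\alpha}L(x))$ (for the latter because $p_{ij}\mu_{ij}((x,\infty))\le\bP[R_i>x]$). Throughout, only states reachable from $i$ occur, so \eqref{eq:condition_regularly_varying} always applies.

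\emph{Step 1: an order bound $\bP[|G_i|>x]=\mathrm{O}(x^{-\alpha}L(x))$.} From $|G_i|\le\sum_t\gamma^t|\hat R_t|$ and the elementary inclusion $\{\sum_t\gamma^t|\hat R_t|>x\}\subseteq\bigcup_t\{|\hat R_t|>c_t x\}$ with $c_t=(1-\gamma^s)\gamma^{-(1-s)t}$ for some $s\in(0,1)$ (so $\sum_t(1-\gamma^s)\gamma^{st}=1$), one gets $\bP[|G_i|>x\mid I_0=i]\le\sum_t\max_j\bP[|R_j|>c_t x]$, and since $c_t$ grows geometrically, Potter's bounds for $L$ make this $\le M\,x^{-\alpha}L(x)$ for all large $x$. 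Note that Theorem~\ref{thm:property_transfer} only yields finiteness of $p$‑th moments for $p<\alpha$, which is too weak, so Step~1 genuinely has to be done by hand.

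\emph{Step 2: finite truncations by induction on $T$.} Put $S_T=\sum_{t=0}^{T-1}\gamma^t\hat R_t$. I claim $b_{T,i}^+:=\lim_{x\to\infty}\bP[S_T>x\mid I_0=i]/(x^{-\alpha}L(x))$ exists with $b_{0,i}^+=0$ and $b_{T,i}^+=q_ic_i+\gamma^\alpha\sum_j p_{ij}b_{T-1,j}^+$, and analogously with $1-q_j$ for the left tail. Conditioning on $I_1=j$ writes $S_T=\hat R_0+\gamma \hat S_{T-1}$ where, given $\{I_0=i,I_1=j\}$, the summands are \emph{independent} with laws $\mu_{ij}$ and $\cL(S_{T-1}\mid I_0=j)$. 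On each branch $j$ one applies the classical two‑term convolution estimates for $Z_{ij}+\gamma W_j$ (lower bound: the union of the two ``one big jump'' events, overlap of order $x^{-2\alpha}L^2$; upper bound: $\{Z+\gamma W>x\}\subseteq\{Z>(1-\epsilon)x\}\cup\{\gamma W>(1-\epsilon)x\}\cup\{Z>\epsilon x,\gamma W>\epsilon x\}$, last event negligible by Step~1 and the preliminary observation). The main obstacle, and the only delicate point, is that the conditional reward laws $\mu_{ij}$ need \emph{not} be regularly varying individually — only the mixture $\sum_j p_{ij}\mu_{ij}=\cL(R_i)$ is — so one must keep the quantities $\mu_{ij}((1\pm\epsilon)x,\infty)$ unevaluated and use $\sum_j p_{ij}\mu_{ij}((1\pm\epsilon)x,\infty)=\bP[R_i>(1\pm\epsilon)x]\sim(1\pm\epsilon)^{-\alpha}q_ic_i\,x^{-\alpha}L(x)$ only \emph{after} weighting by $p_{ij}$ and summing over $j$; together with the inductive hypothesis for the $W_j$, letting $\epsilon\to0$ closes the induction. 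Solving the recursion gives $b_{T,i}^+=\sum_{t=0}^{T-1}\gamma^{\alpha t}\sum_j\bP[I_t=j\mid I_0=i]\,q_jc_j$.

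\emph{Step 3: pass to the limit and conclude.} Write $G_i=S_T+\mathrm{Rem}_T$ with $\mathrm{Rem}_T=\gamma^T\sum_{s\ge0}\gamma^s\hat R_{T+s}$; the Markov/independence structure makes $S_T$ and $\mathrm{Rem}_T$ conditionally independent given $(I_0,\dots,I_T)$, with $\mathrm{Rem}_T\ed\gamma^T G_{I_T}$. By Step~1, $\limsup_x\bP[|\mathrm{Rem}_T|>\epsilon x\mid I_0=i]/(x^{-\alpha}L(x))\le\epsilon^{-\alpha}\gamma^{T\alpha}\max_j C_j\to0$ as $T\to\infty$, and for fixed $T$ the cross term $\bP[S_T>\epsilon x,\mathrm{Rem}_T>\epsilon x\mid I_0=i]$ is a finite (at most $d^T$-fold) mixture of products of two $\mathrm{O}(x^{-\alpha}L(x))$ factors, hence $o(x^{-\alpha}L(x))$. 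The two‑sided bound $\bP[S_T>(1+\epsilon)x]-\bP[\mathrm{Rem}_T<-\epsilon x]\le\bP[G_i>x]\le\bP[S_T>(1-\epsilon)x]+\bP[\mathrm{Rem}_T>(1-\epsilon)x]+\bP[S_T>\epsilon x,\mathrm{Rem}_T>\epsilon x]$, combined with Step~2 and taking $x\to\infty$, then $\epsilon\to0$, then $T\to\infty$, gives $\lim_x\bP[G_i>x]/(x^{-\alpha}L(x))=\sum_j q_jc_j\sum_{t\ge0}\gamma^{\alpha t}\bP[I_t=j\mid I_0=i]=\frac{\sum_j w_{ij}q_jc_j}{1-\gamma^\alpha}$ by the definition of $w_{ij}$, and symmetrically for the left tail. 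Adding the two tails, $\bP[|G_i|>x]\sim\frac{\sum_j w_{ij}c_j}{1-\gamma^\alpha}\,x^{-\alpha}L(x)$ with strictly positive constant, since some $j$ with $i\to j$ has $c_j>0$ and then $w_{ij}>0$; together with the balance this is precisely regular variation of index $\alpha$. (Alternatively one could route through the multivariate affine representation of Section~\ref{sec:multivariate-fixed-point-equations} and the regular‑variation theory for perpetuities with regularly varying innovations in \cite{buraczewski2016stochastic}, but the explicit constants in \eqref{eq:reg-vag-formulas} come out most transparently from the computation above.)
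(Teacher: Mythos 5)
Your argument is correct in substance, but it takes a genuinely different route from the paper. The paper proves Theorem~\ref{thm:regulary_varying_tails} by passing to the multivariate affine equation $\bbG\ed\bbJ\bbG+\bbR$ of Section~\ref{sec:multivariate-fixed-point-equations} with an \emph{independence coupling} of $(R_1,J_1),\dots,(R_d,J_d)$: it shows that $\bbR=(R_1,\dots,R_d)^T$ is multivariate regularly varying (the spectral measure concentrating on $\pm\bbe_j$), invokes Theorem~4.4.24 of \cite{buraczewski2016stochastic} (stated here as Theorem~\ref{thm:regulary_varying_tails_multivariate_general}), and then evaluates the limit measure on the half-spaces $\{\pm\bbe_j^T\bbx>1\}$ to extract \eqref{eq:reg-vag-formulas}; the geometric weights $w_{ij}$ arise from $\prod_{s=0}^{t-1}\bbJ^{(s)}$ having entries $\gamma^t 1(I_{it}=j)$. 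You instead work directly on the series representation from Theorem~\ref{thm:existence} and run a self-contained single-big-jump analysis: an a priori bound $\bP[|G_i|>x]=\mathrm{O}(x^{-\alpha}L(x))$ via a weighted union bound and Potter's bounds, an induction over finite truncations using one-step conditioning on $I_1$, and a sandwich argument for the remainder $\gamma^T G_{I_T}$. What the paper's route buys is brevity (given the cited machinery), consistency with its coupling philosophy, and multivariate regular variation of the coupled return vector as a by-product; what your route buys is an elementary proof that avoids multivariate regular variation altogether and makes the constants in \eqref{eq:reg-vag-formulas} emerge transparently from the recursion $b_{T,i}^+=q_ic_i+\gamma^{\alpha}\sum_j p_{ij}b_{T-1,j}^+$. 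You also correctly spot and handle the one genuinely delicate point, namely that the conditional laws $\mu_{ij}$ need not be regularly varying individually, by keeping their tails unevaluated until after weighting by $p_{ij}$ and summing (note $i\rightarrow i$ always holds, so \eqref{eq:condition_regularly_varying} does apply to $R_i$ itself). Two small repairs are needed in your write-up: the Step~2 lower bound must use the standard events $\{Z>(1+\epsilon)x,\gamma W>-\epsilon x\}$ and $\{\gamma W>(1+\epsilon)x,Z>-\epsilon x\}$ rather than the bare one-big-jump union, and in Step~3 the stated order of limits fails for the lower bound, since the remainder estimate carries a factor $\epsilon^{-\alpha}\gamma^{T\alpha}$ that blows up if $\epsilon\to0$ is taken at fixed $T$; taking $T\to\infty$ before $\epsilon\to0$ (or letting $\epsilon$ depend on $T$) fixes this immediately, as $b_{T,i}^+\to b_{\infty,i}^+$ uniformly in $\epsilon$. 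With these routine adjustments your proof is complete and yields exactly the paper's constants, including the positivity $\sum_j w_{ij}c_j>0$ needed for the final regular-variation claim.
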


\begin{remark}
	For $d=1$ Theorem~\ref{thm:regulary_varying_tails} is equivalent to Theorem~2.4.3 (2) in \cite{buraczewski2016stochastic}. For $d=1$ the tail asymptotic of $\bP[|G_1|>x]$ as $x\rightarrow\infty$ could also be directly obtained from \cite{cline1983infinite}, Theorem~2.3.
\end{remark}

\begin{example}\label{ex:regvar}
	A regularly varying random variable $R$ is called Pareto-like if $\bP[R>x]\sim qcx^{-\alpha}$ and ${\bP[R<-x]\sim (1-q)cx^{-\alpha}}$ for some constants $c>0, q\in[0,1], \alpha>0$, that is if in \eqref{eq:reg_vag} one can choose $L$ to be constant. Examples of Pareto-like distributions include Pareto, Cauchy, Burr and $\alpha$-stable distributions, $\alpha<2$, see \cite{ekm}.
	If there exists a non-empty subset $J\subseteq [d]$ and $\alpha>0$ such that for each $j\in J$ $R_j$ is Pareto-like with index $\alpha$ and for each $j\notin J$ $\bP[|R_j|>x] = o(x^{-\alpha})$, then Theorem~\ref{thm:regulary_varying_tails} applies by choosing $L\equiv 1$ and yields that $G_i$ is also Pareto-like.
\end{example}

\begin{remark}[Application in Distributional Dynamic Programming]\label{rem:distributional-policy-evaluation}
	The goal in Distributional Dynamic Programming is to perform distributional policy evaluation on a computer. A major problem to solve is that probability distributions $\eta_s\in\sP(\bR)$ are infinite dimensional objects and hence some form of approximation has to be applied in practice. Several approaches are discussed in Chapter~5 of \cite{bdr2022}. Additional issues arise in cases of Pareto-like tail behaviors with unbounded support as in Example~\ref{ex:regvar}: it is then known that (some of) the true state-value distributions are Pareto-like, which may be an information desirable to include in evaluating the policy. Theorem~\ref{thm:regulary_varying_tails} justifies to model the tails of the state-value distributions a priori as a Pareto-like distribution, the correct choice of asymptotic parameters is given by \eqref{eq:reg-vag-formulas}. We plan to report on such issues in future work.
\end{remark}

\subsection{Connection to multivariate fixed point equations}\label{sec:multivariate-fixed-point-equations}

A prevalent method for examining probabilistic behaviors of systems involves coupling techniques, wherein multiple dependent versions of a system of interest are constructed and their collective behavior is analyzed. In this section, we adapt this approach to our specific context, demonstrating how it allows for the application of results from the well-established theory of multivariate distributional fixed point equations in the study of distributional Bellman equations.\\
A first key insight is that the operator $\cT:\sP(\bR)^d\to\sP(\bR)^d$, defined in \eqref{eq:operator}, depends on the random pairs $(R_1,J_1),\dots,(R_d,J_d)$ only through their marginal laws
\begin{equation}
\big(\cL(R_1,J_1),\dots,\cL(R_d,J_d)\big)\in\sP(\bR\times[d])^d,
\end{equation}
recall the definition of $\cT = (\cT_1,\dots,\cT_d)$ being
\begin{align*}
\cT:\sP(\bR)^d\longrightarrow\sP(\bR)^d,~~~~
\cT\big(\cL(G_1),\dots,\cL(G_d)\big)&= \big(\cL(R_1+\gamma G_{J_1}),\dots,\cL(R_d+\gamma G_{J_d})\big),  
\end{align*}
where in the $i$-th component the random variables $G_1,\dots,G_d,(R_i,J_i)$ are independent. However, the second key insight is that the independence of $G_1,\dots,G_d$ is not necessary for the definition: any random vector $(\tilde G_1,\dots,\tilde G_d)$ having the same marginal laws as $(G_1,\dots,G_d)$ and being independent of $(R_i,J_i)$ satisfies $\cL(R_i+\gamma \tilde G_{J_i}) = \cL(R_i + \gamma G_{J_i})$.\\
By a \emph{coupling} of $(R_1,J_1),\dots,(R_d,J_d)$ we understand any way the random pairs could be defined on a common probability space, thus having a joint distribution
\begin{equation}\label{eq:joint}
    \cL\big((R_1,J_1),\dots,(R_d,J_d)\big)\in\sP((\bR\times[d])^d).
\end{equation}
A coupling \eqref{eq:joint} induces an operator $\cbT$ on $\sP(\bR^d)$ given by
\begin{align*}
\cbT:\sP(\bR^d)\longrightarrow\sP(\bR^d),~~~
\cbT\big(\cL(G_1,\dots,G_d)\big) = \cL\big(R_1+\gamma G_{J_1},\dots,R_d+\gamma G_{J_d}\big),
\end{align*}
where on the right hand side $(G_1,\dots,G_d)$ and $((R_1,J_1),\dots,(R_d,J_d))$ are independent. The two key insights mentioned above yield that $\cbT$ and $\cT$ are related by 
\begin{equation}\label{eq:connection}
	\cbT(\zeta)\circ \pr^{-1}_i = \cT_i(\zeta\circ \pr^{-1}_1,\dots,\zeta\circ \pr^{-1}_d),~~i\in[d], \zeta\in\sP(\bR^d),
\end{equation}
with $\pr_i:\bR^d\to\bR$ being the $i$-th coordinate projection and hence $\zeta\circ\pr_i^{-1}\in\sP(\bR)$ the $i$-th marginal law of $\zeta\in\sP(\bR^d)$. An immediate consequence of \eqref{eq:connection} of interest to us is
\begin{align}\label{eq:fixed}
	\cL(G_1,&\dots,G_d)=\zeta\in\sP(\bR^d)~~\text{is a fixed point of $\cbT$}\notag\\
 &\Longrightarrow~~(\cL(G_1),\dots,\cL(G_d)) = (\zeta\circ\pr_1^{-1},\dots,\zeta\circ\pr_d^{-1})\in\sP(\bR)^d~~\text{is a fixed point of $\cT$}.
\end{align}
Corollary~\ref{cor:multivariate} below implies the following converse of this implication: if $\cT$ has a (unique) fixed point $(\cL(G_1),\dots,\cL(G_d))$ then for \emph{any} coupling~\eqref{eq:joint} the induced operator $\cbT$ has a unique fixed point which, due to \eqref{eq:fixed}, takes the form of a coupling $\cL(G_1,\dots,G_d)$ of $G_1,\dots,G_d$. This justifies the following approach to study (fixed points of) $\cT$: \emph{choose a convenient coupling \eqref{eq:joint} and study its induced operator $\cS$, in particular the marginal laws of a fixed point of $\cbT$}. We use this approach proving Theorem~\ref{thm:regulary_varying_tails}, where it is most convenient to consider $(R_1,J_1),\dots,(R_d,J_d)$ to be independent.\\
One advantage of this coupling approach is that it enables access to numerous results from the field of multivariate fixed point equations, as we explain in the following.\\
Let $\cbT$ be induced by a coupling \eqref{eq:joint}. Define the random $d$-dimensional vector 
$$\bbR = (R_1,\dots,R_d)^T$$
and the random $d\times d$ matrix 
$$\bbJ = (\bbJ_{ij})_{(i,j)\in[d]\times[d]}~~\text{with}~~\bbJ_{ij} = \gamma\cdot 1(J_i=j).$$ 
The coupling is then completely encoded in the joint distribution $\cL(\bbJ,\bbR)\in\sP\big(\bR^{d\times d}\times\bR^d\big)$. Moreover, for any random vector $\bbG = (G_1,\dots,G_d)^T$ it holds 
$$\bbJ\bbG+\bbR = \big(R_1+\gamma G_{J_1},\dots,R_d+\gamma G_{J_d}\big)^T$$
and hence, if $\bbG$ is independent of $(\bbJ,\bbR)$, it holds that 
$$\cbT(\cL(\bbG)) = \cL(\bbJ\bbG+\bbR)$$ 
and $\cL(\bbG)\in\sP(\bR^d)$ is a fixed point of $\cbT$ if and only if 
\begin{equation}\label{eq:multivariate}
    \bbG\ed \bbJ\bbG+\bbR.
\end{equation}
Note that \eqref{eq:multivariate} is a single distributional equation in $\bR^d$, whereas the distributional Bellman equations, see \eqref{eq:fixed_point_system}, are a system of $d$ distributional equations in $\bR$. Multivariate fixed point equations like \eqref{eq:multivariate} have long been studied for general joint laws $\cL(\bbJ,\bbR)\in\sP(\bR^{d\times d}\times\bR^d)$ under various types of assumptions, some of them directly applicable to the situation presented here. A comprehensive overview of the theory of multivariate distributional fixed point equations and its applications, in particular to stochastic difference equations and many important stochastic time-series models, is presented in \cite{buraczewski2016stochastic}.\\
Theorem~\ref{thm:existence} can be used to show the following results in which $(\bbJ^{(t)},\bbR^{(t)})_{t\in\bN_0}$ are iid copies of $(\bbJ,\bbR)$. 

\begin{corollary}\label{cor:multivariate}
	For any operator $\cbT$ induced by a coupling \eqref{eq:joint} the following statements are equivalent 
		\begin{enumerate}
		\item[(i)] $\cbT$ has a (unique) fixed point,
		\item[(ii)] $\cbT^{\circ n}(\zeta)$ converges weakly as $n\rightarrow\infty$ for some (all) $\zeta\in\sP(\bR^d)$,
		\item[(iii)] The infinite series $\bbG = \sum_{t=0}^{\infty}\Big[\prod_{s=0}^{t-1}\bbJ^{(s)}\Big]\bbR^{(t)}$ is almost surely (absolutely) convergent,
		\item[(iv)] $\cT$ has a (unique) fixed point,
		\item[(v)] $\bE[\log^+|R_i|]<\infty$ for all essential $i$.
	\end{enumerate}
	If one and hence all of these hold, then the unique fixed point $\zeta\in\sP(\bR^d)$ of $\cbT$ is given by the law of the infinite vector-valued series $\cL(\bbG)$ in (iii) and the unique fixed point of $\cT$ is the vector of marginal~laws of~$\zeta$.
\end{corollary}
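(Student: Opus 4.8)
The plan is to derive Corollary~\ref{cor:multivariate} from Theorem~\ref{thm:existence} by exploiting the dictionary \eqref{eq:connection}--\eqref{eq:fixed} together with the vector-series representation. First I would observe that the equivalences (i)$\Leftrightarrow$(iv)$\Leftrightarrow$(v) are almost immediate: (iv)$\Leftrightarrow$(v) is exactly Theorem~\ref{thm:existence}(i)$\Leftrightarrow$(ii), and these conditions depend only on the marginal laws $\cL(R_i,J_i)$, hence are insensitive to the choice of coupling~\eqref{eq:joint}. The implication (i)$\Rightarrow$(iv) is the content of \eqref{eq:fixed}; one direction of the uniqueness/existence correspondence is already spelled out there. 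So the real work is (iv)$\Rightarrow$(i) and the identification of the fixed point of $\cbT$, i.e.\ showing that when $\cT$ has its unique fixed point, any induced $\cbT$ also has a (necessarily unique) fixed point, and that it equals $\cL(\bbG)$.

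For that, I would introduce the coupled version of construction~\eqref{eq:construction}: on the probability space supporting the coupling~\eqref{eq:joint}, take iid copies $(\bbJ^{(t)},\bbR^{(t)})_{t\in\bN_0}$ of $(\bbJ,\bbR)$ and set $\bbG_n = \sum_{t=0}^{n-1}\big[\prod_{s=0}^{t-1}\bbJ^{(s)}\big]\bbR^{(t)}$, the $n$-th partial sum in (iii). Unwinding the matrix products, the $i$-th coordinate of $\bbG_n$ is a sum over length-$n$ paths in $[d]$ starting at $i$ of terms $\gamma^t$ times a reward drawn from the appropriate $\mu_{i_t i_{t+1}}$ — structurally the same object as inside \eqref{eq:composition}, the only difference being that here all $d$ coordinates use the \emph{same} realization of the reward array and path weights, i.e.\ they are coupled. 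Crucially, each individual coordinate of $\bbG_n$ has the same marginal law as the corresponding scalar partial sum $\sum_{t=0}^{n-1}\gamma^t R_{I_t,I_{t+1},t}$ given $I_0=i$. Therefore the almost-sure absolute convergence of the scalar series under condition (v) (Theorem~\ref{thm:existence}(ii)$\Leftrightarrow$(iii)) transfers: $|\bbG_n|$ is a sum of finitely many coordinates each of which converges absolutely a.s.\ by a union bound over $i\in[d]$ (finite $d$), so $\bbG = \lim_n \bbG_n$ exists a.s.\ in $\bR^d$. This proves (v)$\Rightarrow$(iii).

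It remains to check (iii)$\Rightarrow$(i) with the stated formula and to close the uniqueness claim. Given a.s.\ convergence of $\bbG$, a one-line telescoping argument shows $\bbG \ed \bbJ\bbG' + \bbR$ where $\bbG' = \sum_{t=1}^\infty\big[\prod_{s=1}^{t-1}\bbJ^{(s)}\big]\bbR^{(t)}$ is an independent copy of $\bbG$ and is independent of $(\bbJ^{(0)},\bbR^{(0)}) = (\bbJ,\bbR)$; hence $\bbG$ solves \eqref{eq:multivariate}, so $\cL(\bbG)$ is a fixed point of $\cbT$, and by \eqref{eq:fixed} its marginals form a fixed point of $\cT$. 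For uniqueness: if $\zeta,\zeta'$ are both fixed points of $\cbT$, their marginal vectors are both fixed points of $\cT$, hence coincide by Theorem~\ref{thm:existence}; but more is needed since equal marginals do not force $\zeta=\zeta'$. To finish I would iterate $\cbT$ on an arbitrary $\zeta\in\sP(\bR^d)$: writing $\cbT^{\circ n}(\zeta) = \cL\big(\bbG_n + \big[\prod_{s=0}^{n-1}\bbJ^{(s)}\big]\bbH\big)$ for $\bbH\sim\zeta$ independent of everything, the correction term $\big[\prod_{s=0}^{n-1}\bbJ^{(s)}\big]\bbH$ has entries bounded in absolute value by $\gamma^n\max_k|H_k|\to 0$ a.s. (each matrix $\bbJ^{(s)}$ has a single $\gamma$ per row, so the product has row sums $\le\gamma^n$), whence $\cbT^{\circ n}(\zeta)\to\cL(\bbG)$ weakly for every $\zeta$; this simultaneously gives (i)$\Rightarrow$(ii), (ii)$\Rightarrow$(i) via the continuity of $\cbT$ under weak convergence, and uniqueness of the fixed point of $\cbT$. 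The main obstacle I anticipate is precisely this last point — getting genuine uniqueness in $\sP(\bR^d)$ rather than merely marginal uniqueness — and the convergence-of-iterates argument is the natural way around it, mirroring how Theorem~\ref{thm:existence}(iv) is handled in the scalar case; one should double-check that the elementary bound on $\prod_{s=0}^{n-1}\bbJ^{(s)}$ (operator norm, or just entrywise) is stated cleanly, since everything downstream rests on it.
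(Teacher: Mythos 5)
Your proposal is correct and follows essentially the same route as the paper: deduce everything from Theorem~\ref{thm:existence}, use that each coordinate of the vector partial sums has the same law as the scalar partial sums conditioned on $I_0=i$ to get a.s.\ (absolute) convergence of $\bbG$, represent $\cbT^{\circ n}(\zeta)$ as $\cL\big(\bbG_n+\big[\prod_{s=0}^{n-1}\bbJ^{(s)}\big]\bbH\big)$ with vanishing remainder to obtain weak convergence of iterates and uniqueness, close the cycle by continuity of $\cbT$ and \eqref{eq:fixed}. Your extra telescoping identity showing $\cL(\bbG)$ solves \eqref{eq:multivariate} directly is redundant given the iterates argument but harmless.
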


Before we present a short proof we explain how this result fits the known theory of multivariate fixed points equations. Let $|\cdot|$ be the euclidean norm on $\bR^d$. We also write $|\cdot|$ for the induced operator norm on $\bR^{d\times d}$. A result by \cite{brandt1986stochastic} shows that for \emph{any} joint law $\cL(\bbJ,\bbR)\in\sP(\bR^{d\times d}\times\bR^d)$ such that $\bE[\log^+|\bbJ|], \bE[\log^+|\bbR|]<\infty$ and such that (the distribution of) $\bbJ$ has a strictly negative top~Lyapunov exponent, that is 
\begin{equation}\label{eq:Lyapunov}
    \inf_{t\geq 1}\frac{1}{t}\bE\big[\log \big|\prod\nolimits_{s=0}^{t-1} \bbJ^{(s)}\big|\big]~<~0,
\end{equation}
the sum in (iii) converges almost surely and its law is the unique solution to \eqref{eq:multivariate}. We refer to Appendix~E in \cite{buraczewski2016stochastic} for more information about top~Lyapunov exponents. In our case, having $\bbJ_{ij} = \gamma 1(J_i=j)$ and hence $|\prod_{s=0}^{t-1}\bbJ^{(s)}| = \gamma^t$, the top~Lyapunov exponent of $\bbJ$ equals $\log(\gamma) < 0$, so is strictly negative, and $|\bbJ| = \gamma$ implies $\bE[\log^+|\bbJ|]=0<\infty$. The implication (v)$\Rightarrow$(iii) in Corollary~\ref{cor:multivariate} thus says that the condition $\bE[\log^+|\bbR|]<\infty$, which is equivalent to $\bE[\log^+|R_i|]<\infty$ for all $i\in[d]$, can be relaxed to $\bE[\log^+|R_i|]<\infty$ for all essential $i$ to obtain almost sure convergence of the infinite series in (iii) in our case.\\
The implication (ii)$\Rightarrow$(iii) can directly be obtained from Theorem~2.1 in \cite{erhardsson2014conditions} since $|\prod_{s=0}^{t-1}\bbJ^{(s)}| = \gamma^t\rightarrow 0$ (almost surely) in our case. To deduce almost sure convergence from weak convergence in Theorem~\ref{thm:existence} (implication (ii)$\Rightarrow$(iii) there) we present Proposition~\ref{prop:distribution_almost_sure} in Section~\ref{sec:proofs}, the proof of which is based on ideas presented in \cite{kallenberg1997foundations} to prove a version of Kolmogorov's three Series theorem also involving distributional convergence.\\
Besides the top~Lyapunov exponent another important notion in multivariate fixed point equations is that of \emph{irreducibility}: the joint law $\cL(\bbJ,\bbR)\in\sP(\bR^{d\times d}\times\bR^d)$ is called irreducible if the only affine linear subspace $\cH\subseteq\bR^d$ that fulfills $\bP[\bbJ\cH+\bbR\subseteq\cH]=1$ is the complete subspace $\cH=\bR^d$. In case $(\bbJ,\bbR)$ is irreducible, the implication (i)$\Rightarrow$(iii) can be obtained directly from Theorem~2.4 in \cite{bougerol1992strict}. In our situation, irreducibility of $\cL(\bbJ,\bbR)$ is not given in any situation and Corollary~\ref{cor:multivariate} shows that it is also not needed to obtain the implication (i)$\Rightarrow$(iii).

\begin{proof}[Proof of Corollary~\ref{cor:multivariate}]    
	(iv)$\Longleftrightarrow$(v), see Theorem~\ref{thm:existence}.\\
	(iv)$\Longrightarrow$(iii). By Theorem~\ref{thm:existence} the infinite series $\sum_{t=0}^{\infty}\gamma^tR_{I_t,I_{t+1},t}$ is almost surely (absolute) convergent. For each $i\in[d]$ the $i$-th component of the vector-valued infinite series $\sum_{t=0}^{\infty}\Big[\prod_{s=0}^{t-1}\bbJ^{(s)}\Big]\bbR^{(t)}$ has the same law as $\cL\Big(\sum_{t=0}^{\infty}\gamma^tR_{I_t,I_{t+1},t}|I_0=i\Big)$, hence every component converges almost surely (absolutely).\\
	(iii)$\Longrightarrow$(ii). Let $\zeta=\cL(\bbG_0)$ with $\bbG_0$ independent of $(\bbJ^{(t)},\bbR^{(t)})_t$. The $n$-th iteration of $\cbT$ can be represented as
	\begin{equation}
		\cbT^{\circ n}(\zeta) = \cL\Big(\sum_{t=0}^{n-1}\Big[\prod_{s=0}^{t-1}\bbJ^{(s)}\Big]\bbR^{(t)}~~+~~\Big[\prod_{s=0}^{n-1}\bbJ^{(s)}\Big]\bbG_0\Big).
	\end{equation}
	Now $\Big[\prod_{s=0}^{n-1}\bbJ^{(s)}\Big]\bbG_0\rightarrow 0$ almost surely and $\sum_{t=0}^{n-1}\Big[\prod_{s=0}^{t-1}\bbJ^{(s)}\Big]\bbR^{(t)}$ converges almost surely by assumption. So $\cbT^{\circ n}(\zeta)$ converges in distribution, namely to the law of the infinite sum in (iii).\\
	(ii)$\Longrightarrow$(i). Assume $\cbT^{\circ n}(\zeta)\rightarrow \zeta_0$ converges weakly for some $\zeta$. The map $\cbT:\sP(\bR^d)\rightarrow\sP(\bR^d)$ is continuous with respect to the topology of weak convergence, hence $\cbT(\zeta_0) = \lim_n\cbT(\cbT^{\circ n}(\zeta)) = \lim_n\cbT^{\circ(n+1)}(\zeta) = \zeta_0$, so $\zeta_0$ is a fixed point of $\cbT$.\\
	(i)$\Longrightarrow$(iv). See \eqref{eq:connection}.
\end{proof}

\section{Proofs}\label{sec:proofs}

Let $m\in\bN$ and $\gamma\in(0,1)$ be fixed. Consider random variables
\begin{equation}\label{eq:construction-easy}
(K_t)_{t\in\bN_0},~(X_{lt})_{l\in[m]}, t\in\bN_0
\end{equation}
such that $(K_t)_{t}$ is an $[m]=\{1,\dots,m\}$-valued homogeneous Markov chain with an arbitrary starting distribution, $(X_{lt})_{lt}$ is an array of independent real-valued random variables such that for each $l\in[m]$ the variables $(X_{lt})_t$ are iid and $(X_{lt})_{lt}$ and $(K_t)_{t}$ are independent. A state $l$ is called \emph{accessible} if $\bP[K_t=l]>0$ for some $t\in\bN_0$. Note that we allow any starting distribution in our formulation, hence this is not automatically satisfied. A state $l$ is \emph{accessible essential} if and only if $\bP[K_t=l~\text{for infinitely many}~t\in\bN_0]>0$ and this probability equals $1$ if conditioned on the event of at least one visit in $l$. In light of Theorem~\ref{thm:existence}, which we proof in the next subsection, any result shown within the setting \eqref{eq:construction-easy} can be applied to \eqref{eq:construction} by considering
\begin{equation}
[m]\simeq [d]^2, l=(i,j), X_{lt} = R_{ijt}, K_t = (I_t,I_{t+1})
\end{equation}
and in this case working under the condition $\bP[~\cdot~|I_0=i]$ is equivalent to consider the starting distribution $\bP[K_0=(i',j)] = 1(i'=i)p_{ij}$.

\subsection{Existence and Uniqueness: Proof of Theorem~\ref{thm:existence}}\label{sec:existence}

The proof of Theorem~\ref{thm:existence} is based on the following proposition, which shares similarities with Kolmogorov's three-series theorem presented Theorem~5.18 in \cite{kallenberg1997foundations}.
\begin{proposition}\label{prop:distribution_almost_sure}
	For $n\in\bN$ let $S_n = \sum_{t=0}^{n-1} \gamma^t X_{K_t,t}$. Then the following are equivalent
	\begin{itemize}
		\item[(i)] $S_n$ converges in distribution as $n\rightarrow\infty$.
		\item[(ii)] $S_n$ converges almost surely as $n\rightarrow\infty$.
		\item[(iii)] $\bE[\log^+|X_{lt}|]<\infty$ for each accessible essential state $l\in[L]$.
	\end{itemize}
\end{proposition}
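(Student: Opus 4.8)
The plan is to establish the cycle (ii)$\Rightarrow$(i)$\Rightarrow$(iii)$\Rightarrow$(ii), following the structure of Kolmogorov's three-series theorem but adapted to the Markov-modulated setting. The implication (ii)$\Rightarrow$(i) is immediate since almost sure convergence implies convergence in distribution. The nontrivial content is in the other two implications, and I expect (i)$\Rightarrow$(iii) to be the main obstacle.

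For (i)$\Rightarrow$(iii), fix an accessible essential state $l$. The idea is to extract from the full sum $S_n$ the contribution of the visits to $l$. Condition on the event that $K_0 = l$ (which has positive probability after shifting the starting time, or can be arranged since $l$ is accessible); by the essential/recurrent property the chain returns to $l$ infinitely often almost surely, at a sequence of (random) return times $0 = \tau_0 < \tau_1 < \tau_2 < \cdots$. On the event $\{K_t = l\}$ the summand is $\gamma^t X_{lt}$ with $X_{lt}$ i.i.d.\ $\sim \mu_l := \cL(X_{l0})$ and independent of the chain. If $S_n$ converges in distribution, then (by a subsequence argument, or by noting weak convergence of $S_n$ forces tightness of the tails of the individual terms along the relevant times) one deduces $\gamma^{\tau_k} X_{l,\tau_k} \to 0$, and more precisely that $\sum_k \gamma^{\tau_k} X_{l,\tau_k}$ must be an almost surely convergent series — the delicate point is passing from distributional convergence of the \emph{whole} sum to control of this \emph{sub-sum} over return times to $l$. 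The cleanest route is: conditionally on the entire path $(K_t)_t$, $S_n$ is a sum of independent (non-identically distributed) terms, so by Kolmogorov's three-series theorem applied pathwise, almost sure convergence of $S_n$ given the path is equivalent to convergence of the three associated series; and distributional convergence of $S_n$ (unconditional) can be upgraded using the Lévy-type / Itô--Nisio reasoning from \cite{kallenberg1997foundations} to almost sure convergence. Once almost sure convergence of $S_n$ is in hand, restricting to times $\tau_k$ forces $\sum_k \gamma^{\tau_k} X_{l,\tau_k} < \infty$ a.s.; since the $\gamma^{\tau_k}$ decay geometrically along a positive-recurrent-type return structure but the $X_{l,\tau_k}$ are i.i.d., a Borel--Cantelli / three-series computation yields $\bE[\log^+|X_{l0}|] < \infty$. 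This last step is the classical equivalence ``$\sum \gamma^k Y_k$ converges a.s.\ for i.i.d.\ $Y_k$ $\iff$ $\bE[\log^+|Y_0|]<\infty$'' (cf.\ Vervaat \cite{vervaat}), applied along the subsequence of return times after bounding the return-time gaps.

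For (iii)$\Rightarrow$(ii), assume $\bE[\log^+|X_{lt}|] < \infty$ for every accessible essential $l$. Decompose the time axis according to whether $K_t$ is in the (finite) set $E$ of accessible essential states or in the set of accessible non-essential (transient) states $D$. Since $[m]$ is finite, the chain spends only finitely many steps in $D$ almost surely, so $\sum_{t:\, K_t \in D} \gamma^t X_{K_t,t}$ is a finite sum, hence converges trivially (no moment assumption needed there). For the essential part, for each $l \in E$ we have an i.i.d.\ sequence $(X_{lt})_t$ with $\bE[\log^+|X_{l0}|]<\infty$, and the relevant sum is $\sum_{t:\,K_t = l} \gamma^t X_{lt}$. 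Since $\gamma^t \to 0$ geometrically, $\sum_t \gamma^t X_{lt}$ (the full i.i.d.\ series, ignoring the Markov constraint) converges absolutely a.s.\ by the Vervaat criterion; dominating the constrained sum by the unconstrained one (in absolute value) gives absolute convergence of $\sum_{t:\,K_t=l}\gamma^t|X_{lt}|$. Summing over the finitely many $l \in E$ and adding the finite transient contribution yields that $S_n$ converges absolutely almost surely. The one technical care needed is the independence structure: $(X_{lt})_{lt}$ is independent of $(K_t)_t$, so one can condition on the chain's path, reducing everything to a deterministic selection of indices from independent arrays, which is exactly the setting where the i.i.d.\ comparison is valid.
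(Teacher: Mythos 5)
Your (ii)$\Rightarrow$(i) and (iii)$\Rightarrow$(ii) steps are fine (the domination of the constrained sum $\sum_{t:K_t=l}\gamma^t|X_{lt}|$ by the full i.i.d.\ series $\sum_t\gamma^t|X_{lt}|$, plus finiteness of the transient contribution, is a correct and slightly cleaner variant of the paper's Borel--Cantelli computation). The genuine gap is in (i)$\Rightarrow$(iii), at exactly the point you flag as ``delicate'' and then assert away: you claim that distributional convergence of $S_n$ can be ``upgraded using L\'evy-type / It\^o--Nisio reasoning'' to almost sure convergence. It\^o--Nisio (and the L\'evy equivalence) applies to sums of \emph{independent} summands, but unconditionally the terms $\gamma^t X_{K_t,t}$ are dependent through the common chain $(K_t)_t$; and conditioning on the path does not help directly, because weak convergence of the unconditional law $\cL(S_n)$, which is a mixture over paths, does not imply weak convergence of the conditional laws $\cL(S_n\mid (K_t)_t=(k_t)_t)$ for almost every path. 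So the proposed route from (i) to (ii) is unsupported as written. The paper closes precisely this hole by a different device: it argues the contrapositive via \emph{symmetrization}. If some accessible essential $l$ has $\bE[\log^+|X_{lt}|]=\infty$, then the symmetrized sums $\tilde S_n=\sum_{t<n}\gamma^t(X_{K_t,t}-X'_{K_t,t})$ diverge a.s.; pathwise these are sums of independent symmetric terms, so Kallenberg's Theorem~5.17 gives $|\tilde S_n|\to\infty$ in probability for a.e.\ fixed path, and the symmetrization inequality (Lemma~5.19), again applied pathwise and then integrated with Fatou/Fubini, yields $\liminf_n\bP[|S_n|>C]\ge \tfrac12$ for every $C$, i.e.\ non-tightness of $(\cL(S_n))_n$, contradicting (i). Some such argument (or another way to pass from unconditional weak convergence to pathwise information) is the missing ingredient in your plan.

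A secondary, repairable slip: from a.s.\ convergence of $S_n$ you cannot conclude that the subseries $\sum_k\gamma^{\tau_k}X_{l,\tau_k}$ over the return times converges (subseries of conditionally convergent series need not converge). What the argument actually needs, and what does follow, is only that the individual terms tend to zero, $\gamma^{\tau_k}X_{l,\tau_k}\to 0$ a.s.; combined with $\tau_k\sim c_l k$ and a second Borel--Cantelli argument conditional on the path (Fubini over the law of the return times), this forces $\bE[\log^+|X_{l0}|]<\infty$, which is essentially the paper's (ii)$\Rightarrow$(iii) proof via the root test. So that portion of your sketch is sound once restated in terms of the terms tending to zero, but the (i)$\Rightarrow$(ii) upgrade remains the real gap.
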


\begin{proof}
		Every finite state space Markov chain with probability one finally takes values in one of its irreducible components, hence to show (iii)$\Leftrightarrow$(ii) we can reduce to the case that $(K_t)_t$ is irreducible, in particular every state is accessible essential in this case.\\
		(iii)$\Rightarrow$(ii): We assume $\bE[\log^+|X_{lt}|]<\infty$ for all $l$. Let $0<c<\log(1/\gamma)$ and $\tilde c = \log(1/\gamma)-c>0$. For each $t\in\bN_0$ let
		\begin{equation}\label{eq:rep}
			A_t=\{\gamma^t|X_{K_t,t}|\geq \exp(-ct)\} = \{\log^+|X_{K_t,t}|\geq \tilde c t\}.
		\end{equation}
		We show $\sum_{t=0}^{\infty}\bP[A_t]<\infty$ and apply the Borel--Cantelli lemma. Conditioning on $K_t=l$ and bounding $\bP[K_t=l]\leq 1$ yields
		\begin{align*}
		\sum_{t=0}^{\infty}\bP[A_t] = \bP[A_0] + \sum_{t=1}^{\infty}\bP\Big[\frac{\log^+|X_{K_t,t}|}{\tilde c}\geq t\Big] &\leq \bP[A_0] + \sum_{l=1}^L\sum_{t=1}^{\infty}\bP\Big[\frac{\log^+|X_{l,t}|}{\tilde c}\geq t\Big]\\
		&\leq \bP[A_0] + \sum_{l=1}^L \bE\Big[\frac{\log^+|X_{l,t}|}{\tilde c}\Big] < \infty.
		\end{align*}
		The Borel--Cantelli lemma implies that $A_t^c = \{\gamma^t|X_{K_t,t}|<\exp(-ct)\}$ occurs for all but finitely many $t$ almost surely and hence almost surely $\sum_{t=0}^{\infty}\gamma^t|X_{K_t,t}|<\infty$, since the tail of the infinite sum is a.s. finally dominated by that of the geometric series $\sum_t \exp(-ct)<\infty$. Hence, $S_n$ is absolutely convergent and thus convergent almost surely.\\
		(ii)$\Rightarrow$(iii): We assume there is an accessible essential state $l$ such that $\bE[\log^+|X_{lt}|]=\infty$ and show that $S_n$ diverges almost surely. We closely follow the idea in the proof of \cite{buraczewski2016stochastic}, Theorem 2.1.3 (also \cite{vervaat}, proof of Lemma 1.7) and apply the root test, which states that for any real-valued sequence $(a_t)_{t\in\bN_0}$
		$$\limsup_{t\in\bN_0}|a_t|^{1/t}>1~~\Longrightarrow~~\sum_{t=0}^{\infty} a_t~~\text{diverges},$$
		where 'diverges' means not having a finite limit. Let $\tilde C>1$, $C = \tilde C/\gamma>1$ and
		\begin{align*}
			B_t &= \Big\{\big|\gamma^tX_{K_t,t}\big|^{1/t}>\tilde C\Big\}\\
				&= \big\{|X_{K_t,t}|^{1/t}>C\big \}\\
				&= \{\log^+|X_{K_t,t}|>t\log(C)\}.
		\end{align*}
		If we can show $\bP[\limsup_{t\in\bN_0}B_t] = 1$ the root test applies and yields the almost sure divergence of $S_n$ and hence not (ii). We want to apply the second Borel--Cantelli lemma, but since the events $B_t, t\in\bN_0$ are not independent we have to take a little higher effort. For $l\in[m], j\in\bN$ let $T_{lj}$ be the time of the $j$-th visit of $(K_t)_t$ in $l$ with $T_{lj}=\infty$ in case no such visit occurs. With this it holds that
		\begin{equation}
			\limsup_{t\in\bN_0}B_t = \bigcup_{l=1}^m\limsup_{j\in\bN}\{\log^+|X_{l,T_{lj}}|>T_{lj}\log(C)\}.
		\end{equation}
		For calculating the probability of the right-hand side we can replace $X_{l,T_{lj}}$ with $X_{lj}$ since $(X_{lt})_{lt}$ and $(T_{lj})_{lj}$ are independent and $(X_{lt})_t$ is iid for each $l$. So by defining the event
		$$C_{lj} = \{\log^+|X_{lj}|>T_{lj}\log(C)\}$$
		we have
		$$\bP[\limsup\nolimits_{t\in\bN_0} B_t] = \bP\Big[\bigcup_{l'=1}^L\limsup\nolimits_{j\in\bN} C_{l'j}\Big] \geq \bP[\limsup\nolimits_{j\in\bN} C_{lj}],$$
		where on the right hand side the state $l$ is such that $\bE[\log^+|X_{lt}|]=\infty$ which exists by assumption. We show that the right-hand side is one. Using independence of $(X_{lt})_{t}$ and $(T_{lj})_j$ Fubini's theorem implies
		\begin{equation}\label{eq:cond}
			\bP\big[\limsup\nolimits_{j\in\bN} C_{lj}\big] = \int \bP\Big[\limsup\nolimits_{j\in\bN}\big\{\log^+|X_{lj}|>t_{lj}\log(C)\big\}\Big] d\bP^{(T_{lj})_j}\big((t_{lj})_j\big),
		\end{equation}
        where the integral on the right hand side is with respect to the distribution of $(T_{lj})_j$ under $\bP$. Since $(K_t)_t$ is assumed to be an irreducible Markov chain it holds that $\frac{1}{j}T_{lj}\rightarrow c_l>0$ almost surely as $j\rightarrow\infty$ (where $c_l$ is the inverse of the unique stationary distribution probability for state $l$). Let $(t_{lj})_{lj}$ be a realization of $(T_{lj})_{lj}$ with $t_{lj}\sim j c_l$ as $j\rightarrow\infty$. If we can show that for any such realization it holds that $\sum_{j=1}^{\infty}\bP[\log^+|X_{lj}|>t_{lj}\log(C)] = \infty$ we can apply the second Borel--Cantelli lemma to conclude $\bP\big[\limsup_{j\in\bN}\{\log^+|X_{lj}|>t_{lj}\log(C)\}\big] = 1$. Formula \eqref{eq:cond} and the fact that almost every realization $(t_{lj})_j$ of $(T_{lj})_j$ satisfies $t_{lj}\sim j c_l$ then would yield $\bP[\limsup_{j\in\bN} C_j]=1$ as desired.\\
		Since $t_{lj}\sim jc_l$ for every $\varepsilon>0$ there is some $j_0$ such that $t_{lj}<(1+\varepsilon)j c_l$ for every $j>j_0$ which allows to conclude
		\begin{equation*}
			\sum_{j=1}^{\infty}\bP[\log^+|X_{lj}|>(1+\varepsilon)j c_l\log(C)] = \infty~~\Longrightarrow~~\sum_{j=1}^{\infty}\bP[\log^+|X_{lj}|>t_{lj}\log(C)] = \infty
		\end{equation*}
		With $C' = (1+\varepsilon)c_l\log(C) > 0$ we have
		\begin{equation*}
			\sum_{j=1}^{\infty}\bP[\log^+|X_{lj}|>(1+\varepsilon)j c_l\log(C)] \geq \bE\Big[\Big\lfloor\frac{\log^+|X_{lj}|}{C'}\Big\rfloor\Big] = \infty,
		\end{equation*}
		since by assumption $\bE[\log^+|X_{lj}|]=\infty$.\\
		(ii)$\Rightarrow$(i). Obvious\\
		(i)$\Rightarrow$(iii). We show that if there is some accessible essential state $l$ with $\bE[\log^+|X_{l,t}|]=\infty$ then $S_n$ does not converge in distribution. For that we use two technical Lemmas from \cite{kallenberg1997foundations} which were used to prove a version of Kolmogorov's three series theorem. Let $(X'_{lt})_{lt}$ be an array of independent RVs, independent of $(X_{lt})_{lt}, (K_t)_t$ and distributed as $(X_{lt})_{lt}$. Let $\tilde X_{lt} = X_{lt} - X'_{lt}$. In \cite{kallenberg1997foundations} $\tilde X_{lt}$ is called \emph{symmetrization} of $X_{lt}$. Let $S'_n = \sum_{t=0}^{n-1}\gamma^t X'_{K_t,t}$ and $\tilde S_n = S_n - S_n' = \sum_{t=0}^{n-1}\gamma^t \tilde X_{K_t,t}$. Note that $\tilde S_n$ is not a symmetrization of $S_n$, because $S_n, S_n'$ are not independent as they both depend on $(K_t)_t$. But for any fixed realization $(k_t)_t$ we have $\sum_{t=0}^{n-1}\gamma^t \tilde X_{k_t,t}$ a symmetrization of $\sum_{t=0}^{n-1}\gamma^t X_{k_t,t}$, which is enough for our reasoning.\\
        From $\bE[\log^+|X_{l,t}|]=\infty$ one can conclude that
		$\bE[\log^+|\tilde X_{l,t}|]=\infty$ and hence the already established equivalence (ii)$\Longleftrightarrow$(iii) shows that $\tilde S_n$ does not converge almost surely, in fact we saw that $\tilde S_n$ diverges almost surely. Thus, using independence of $(K_t)_t$ and $(\tilde X_{kt})_{kt}$, Fubini's theorem implies
        $$1 = \bP[\tilde S_n~~\text{diverges as $n\rightarrow\infty$}] = \int \bP\Big[\sum\nolimits_{t=0}^{n-1}\gamma^t \tilde X_{k_t,t}~~\text{diverges as $n\rightarrow\infty$}\Big] d\bP^{(K_t)_t}((k_t)_t),$$
        hence for almost every realization $(k_t)_t$ of $(K_t)_t$ the sequence $\sum_{t=0}^{n-1}\gamma^t \tilde X_{k_t,t}$ diverges almost surely. Theorem~5.17 in \cite{kallenberg1997foundations} about sums of independent symmetric random variables applies and shows that $\big|\sum_{t=0}^{n-1}\gamma^t \tilde X_{k_t,t}\big|\rightarrow\infty$ in probability for almost every $(k_t)_t$. Hence, for almost every realization $(k_t)_t$ and any constant $C>0$ it holds that
		\begin{equation}
			\lim_{n\rightarrow\infty}\bP\Big[\Big|\sum_{t=0}^{n-1}\gamma^t \tilde X_{k_t,t}\Big|>C\Big] = 1.
		\end{equation}
		Since $\sum_{t=0}^{n-1}\gamma^t \tilde X_{k_t,t}$ is a symmetrization of $\sum_{t=0}^{n-1}\gamma^t X_{k_t,t}$, Lemma~5.19 in \cite{kallenberg1997foundations} applies and yields the following bound
		\begin{equation}
			\bP\Big[\Big|\sum_{t=0}^{n-1}\gamma^t \tilde X_{k_t,t}\Big|>C\Big] \leq 2 \bP\Big[\Big|\sum_{t=0}^{n-1}\gamma^t X_{k_t,t}\Big|>C\Big].
		\end{equation}
		Hence, for almost every realization $(k_t)_t$ of $(K_t)_t$ it holds that
		\begin{equation}
			\liminf_{n\rightarrow\infty}\bP\Big[\Big|\sum_{t=0}^{n-1}\gamma^t X_{k_t,t}\Big|>C\Big]~\geq \frac{1}{2}.
		\end{equation}
		Fatou's lemma together with Fubini's theorem yields
		\begin{equation}
			\liminf_{n\rightarrow\infty}\bP\Big[\big|S_n\big|>C\Big]~\geq \frac{1}{2}.
		\end{equation}
		This holds for every $C>0$ and hence the sequence $\cL(S_n), n\in\bN$ is not tight which shows that it does not converge weakly.
\end{proof}

\begin{proof}[Proof of Theorem~\ref{thm:existence}]
	Using the special construction let
	$$S_n = \sum_{t=0}^{n-1}\gamma^t R_{I_t,I_{t+1}, t}.$$
	For any $\nu\in\sP(\bR)^d$ with $i$-th component $\nu_i = \cL(W_i)$ let $W_1,\dots,W_d, (R_{ijt})_{ijt}, (I_t)_t$ be independent. Then the representation
	$$\cT^{\circ n}_i(\nu) = \cL(S_n + \gamma^n W_{I_n}|I_0=i)$$
	holds.
	(i)$\Longleftrightarrow$(iv). Let $\eta\in\sP(\bR)^d$ be a fixed point of $\cT$, hence with $i$-th component $\eta_i=\cL(G_i)$ it holds that
	$$\eta_i = \cT^{\circ n}_i(\eta) = \cL(S_n + \gamma^n G_{I_n}|I_0=i).$$
	Since $\gamma^n G_{I_n}\rightarrow 0$ almost surely this implies that conditioned on $I_0=i$ the sequence $S_n$ converges to $\eta_i$ in distribution. In particular, if a fixed point exists, its $i$-th component has to be the distributional limit of $S_n$ under $I_0=i$ and thus fixed points are unique. If a fixed point exists, then $\cT^{\circ n}(\nu)$ converges to the unique fixed point $\eta$ for any $\nu$, since $\gamma^n W_{I_n}\rightarrow 0$ almost surely holds for any random variables $W_1,\dots,W_d$. Now suppose that $\cT^{\circ n}(\nu)\rightarrow \eta$ holds for some $\nu$. The map $\cT$ is continuous in the product topology on $\sP(\bR)^d$ and hence $\cT(\eta) = \cT(\lim_n\cT^{\circ n}(\nu)) = \lim_n\cT(\cT^{\circ n}(\nu)) = \lim_n\cT^{\circ (n+1)}(\nu) = \eta$. So a fixed point exists which finished the equivalence (i)$\Longleftrightarrow$(iv).\\
	(iv)$\Longrightarrow$(iii). Supposing (iv) yields that conditioned on $I_0=i$ the sequence $S_n$ converges in distribution. Applying Proposition~\ref{prop:distribution_almost_sure} by considering $l=(i,j)$ and $X_{lt} = R_{ijt}$ and $K_t = (I_t,I_{t+1})$ with $\bP[K_0=(i',j)] = 1(i'=i)p_{ij}$ yields the almost sure convergence. Since it holds conditioned for every $i$, it also holds unconditioned, hence (iii).\\
	(iii)$\Longrightarrow$(iv). Almost sure convergence holds also conditioned on $I_0=i$, one can now apply Proposition~\ref{prop:distribution_almost_sure}.\\
	(iii)$\Longleftrightarrow$(ii). Again we apply the equivalence of (ii) and (iii) from Proposition~\ref{prop:distribution_almost_sure}. We only note that in the situation $l=(i,j)$ and $K_t=(I_t,I_{t+1})$ a state $l=(i,j)$ is essential with respect to $(K_t)_t$ iff $i$ is essential with respect to $(I_t)_t$ and $p_{ij}>0$. Also any state $l$ with $p_{ij}>0$ is accessible when starting in $I_0=i$. Hence, one obtains that the infinite sum (iii) in Theorem~\ref{thm:existence} converges almost surely iff $\bE[\log^+|R_{ijt}|]<\infty$ for each essential pair $(i,j)$, that is for each essential $i$ and $j$ with $p_{ij}>0$. Since we defined $\mu_{ij}=\delta_0$ for $p_{ij}=0$ this is equivalent to $\bE[\log^+|R_i|]<\infty$ for every essential $i$.
\end{proof}

\subsection{Property transfer: Proof of Theorem~\ref{thm:property_transfer}}\label{sec:property_transfer}

\begin{proposition}\label{prop:exponential_moments}
	Let $\beta>0$. If $\bE[\exp(\beta |X_{lt}|)]<\infty$ for each accessible state $l$ then the infinite sum $S = \sum_{t=0}^{\infty} \gamma^t X_{K_t,t}$ exists almost surely and it holds that $\bE[\exp(\beta|S|)]<\infty$.
\end{proposition}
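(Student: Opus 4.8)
The plan is to dominate $|S|$ by the nonnegative series $\sum_{t=0}^{\infty}\gamma^t|X_{K_t,t}|$ and to estimate the exponential moment of the latter by conditioning on the chain $(K_t)_t$. First I would record two preliminary observations. Since $[m]$ is finite and every accessible state $l$ satisfies $\bE[\exp(\beta|X_{lt}|)]<\infty$ — a quantity which does not depend on $t$ because $(X_{lt})_t$ is iid for each $l$ — the number $M:=\max\{\bE[\exp(\beta|X_{lt}|)]:l\text{ accessible}\}$ is finite. Secondly, for almost every realization $(k_t)_t$ of $(K_t)_t$ every $k_t$ is accessible, because $\bP[K_t=l]>0$ forces $l$ to be accessible and hence $\bP[\exists t:\ K_t\text{ not accessible}]=0$ by countable subadditivity.

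For the main estimate I would condition on $(K_t)_t=(k_t)_t$. Given such a path, the variables $(X_{k_t,t})_{t\in\bN_0}$ are independent — they are distinct entries of the array $(X_{lt})_{lt}$ of independent random variables — and their joint law is unaffected by the conditioning since $(K_t)_t$ is independent of the array. Applying the monotone convergence theorem to the increasing partial sums and using independence for each finite truncation,
$$\bE\Big[\exp\Big(\beta\sum_{t=0}^{\infty}\gamma^t|X_{k_t,t}|\Big)\Big]=\prod_{t=0}^{\infty}\bE\big[\exp(\beta\gamma^t|X_{k_t,t}|)\big].$$
The key elementary inequality is that convexity of $u\mapsto e^{ux}$ gives $e^{\theta x}\leq 1+\theta(e^{x}-1)$ for $x\geq 0$ and $\theta\in(0,1]$; applying it with $\theta=\gamma^t$ and $x=\beta|X_{k_t,t}|$ and taking expectations yields $\bE[\exp(\beta\gamma^t|X_{k_t,t}|)]\leq 1+\gamma^t(M-1)$ for every $t$ (using that each $k_t$ is accessible and $e^{x}-1\geq 0$). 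Since $\log(1+u)\leq u$ for $u\geq 0$,
$$\log\prod_{t=0}^{\infty}\bE\big[\exp(\beta\gamma^t|X_{k_t,t}|)\big]\leq\sum_{t=0}^{\infty}\gamma^t(M-1)=\frac{M-1}{1-\gamma},$$
a bound uniform over almost all paths.

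Finally I would integrate the conditional bound against the law of $(K_t)_t$: from $\exp(\beta|S|)\leq\exp\big(\beta\sum_{t}\gamma^t|X_{K_t,t}|\big)$ and the tower property,
$$\bE[\exp(\beta|S|)]\leq\bE\Big[\exp\Big(\beta\sum_{t=0}^{\infty}\gamma^t|X_{K_t,t}|\Big)\Big]\leq\exp\Big(\frac{M-1}{1-\gamma}\Big)<\infty,$$
which also forces $\sum_t\gamma^t|X_{K_t,t}|<\infty$ almost surely, so that $S$ converges absolutely almost surely (alternatively, a finite exponential moment implies a finite $\log^+$ moment, and a.s.\ convergence then follows from Proposition~\ref{prop:distribution_almost_sure}). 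I do not anticipate a genuine obstacle: the only points needing care are the conditioning step — verifying that along a fixed path the summands are independent and that almost every path visits accessible states only, so that the single constant $M$ suffices — and the interchange of expectation with the infinite product, which monotone convergence handles cleanly.
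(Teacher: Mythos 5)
Your proof is correct and follows essentially the same route as the paper's: dominate $\exp(\beta|S|)$ by the exponential of the absolute series, condition on the chain, use independence of the array from the chain to factor the conditional expectation into a product, bound each factor uniformly via the finite maximum over accessible states (noting almost every path stays in accessible states), and sum the geometric series. The only difference is the elementary estimate for each factor: you use the convexity bound $e^{\theta x}\le 1+\theta(e^{x}-1)$ to get $\exp\big((M-1)/(1-\gamma)\big)$, whereas the paper applies Jensen's inequality $\bE\big[Y^{\gamma^t}\big]\le\bE[Y]^{\gamma^t}$ to get $c(\beta)^{1/(1-\gamma)}$ --- both give the required path-uniform bound, so this is a cosmetic rather than structural difference.
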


\begin{proof}
		The infinite sum exists almost surely due to Proposition~\ref{prop:distribution_almost_sure}, as the existence of exponential moments for each accessible state clearly yields the existence of the logarithmic moment. Using triangle inequality, that $x\mapsto\exp(\beta x)$ is continuous non-decreasing and monotone convergence of expectations yields
		$$\bE[\exp(\beta|S|)] \leq \lim_{n\rightarrow\infty}\bE[\exp(\beta\sum_{t=0}^{n}\gamma^t |X_{K_t,t}|)].$$
		Conditioning on $(K_0,\dots,K_n)$ and using independence of $(K_t)_t$ and $(X_{lt})_{lt}$ yields
		\begin{equation}\label{eq:expmom}
			\bE[\exp(\beta\sum_{t=0}^{n}\gamma^t |X_{K_tt}|)] \leq \bE\Big[\prod_{t=0}^n \bE\big[\exp(\beta|X_{lt}|)^{\gamma^t}\big]_{|l=K_t}\Big].
		\end{equation}
		For each $t\in\bN_0$ we have $\gamma^t\in(0,1)$ and hence $x\mapsto x^{(\gamma^t)}$ is a concave function on $[0,\infty)$. For each accessible state $l$ we have
		$$\bE\big[\exp(\beta|X_{lt}|)^{\gamma^t}\big]\leq \bE\big[\exp(\beta|X_{lt}|)\big]^{\gamma^t}<\infty.$$
		Let $c(\beta) = \max\big\{\bE[\exp(\beta|X_l|)]~|~l\in[m]~\text{accessible}\}$. We have $c(\beta)<\infty$. One can bound the right hand side of \eqref{eq:expmom} by
		$$\prod_{t=0}^n c(\beta)^{\gamma^t} = c(\beta)^{\sum_{t=0}^n\gamma^t}.$$
		Putting all together yields
		$$\bE[\exp(\beta|S|)] \leq \lim_{n\rightarrow\infty}c(\beta)^{\sum_{t=0}^n\gamma^t} = c(\beta)^{1/(1-\gamma)}<\infty.$$
\end{proof}

\begin{proof}[Proof of Theorem~\ref{thm:property_transfer}]
	Consider $G_i = \sum_{t}\gamma^t R_{I_t,I_{t+1},t}$ starting with $I_0=i$. With probability one $(I_t,I_{t+1})$ realizes to a pair $(j,j')$ with $i\rightarrow j$ and $p_{jj'}>0$.\\
	1. We have $\bP[|R_j|\leq K] = \sum_{j'}p_{jj'}\bP[|R_{jj't}|\leq K]$. Hence, if $\bP[|R_{j}|\leq K]=1$ for all $j$ with $i\rightarrow j$ then $\bP[|R_{jj't}|\leq K]=1$ for all $(j,j')$ with $i\rightarrow j$ and $p_{jj'}>0$. This yields $\bP[|R_{I_t,I_{t+1},t}|\leq K|I_0=i] =1$ for every $t$ and hence $|G_i|\leq \sum_{t}\gamma^t |R_{I_t,I_{t+1},t}| \leq K/(1-\gamma)$ almost surely.\\
	2. Due to Theorem~\ref{thm:existence} we can consider the situation of Proposition~\ref{prop:exponential_moments} via $l=(i,j)$, $X_{lt} = R_{ijt}$ and $K_t = (I_t,I_{t+1})$ having starting distribution $\bP[K_0 = (i,j)] = p_{ij}$. In this situation a state $l = (i',j)$ is accessible with respect to $(K_t)_t$ if and only if $i\rightarrow i'$ and $p_{i'j}>0$. So $G_i$ has the same distribution as the almost sure limit $\sum_{t=0}^{\infty}\gamma^t X_{K_t,t}$ and one obtains $\bE[\exp(\beta|G_i|)]<\infty$.\\
	3. We have $\bE[|R_j|^p] = \sum_{j'}p_{jj'}\bE[|R_{jj't}|^p]$ hence $\bE[|R_j|^p]<\infty$ for all $j$ with $i\rightarrow j$ implies $\bE[|R_{jj't}|^p]<\infty$ for all $(j,j')$ with $i\rightarrow j$ and $p_{jj'}>0$. Let $||R_{jj't}||_p = \bE[|R_{jj't}|^p]^{1/p}$ and $K = \max_{(j,j')}||R_{jj't}||_p$ where the maximum runs over all relevant pairs $(j,j')$, hence $K<\infty$. Let the Markov chain $(I_t)_t$ be started in state $I_0=i$. Then $G_{i,T} = \sum_{t=0}^T\gamma^t R_{I_t,I_{t+1},t}\to G_i$ almost surely as $T\to\infty$ and the process $(I_t,I_{t+1})_t$ only visits pairs $(j,j')$ with $||R_{jj't}||_p\leq K<\infty$. Since $||\cdot||_p$ is a norm, for $T'>T$ 
     \begin{align*}
         ||G_{i,T'}-G_{i,T}||_p = \Big\|\sum_{t=T+1}^{T'}\gamma^t R_{I_t,I_{t+1},t}\Big\|_p \leq \sum_{t=T+1}^{T'} ||\gamma^tR_{I_t,I_{t+1},t}||_p \leq K\cdot \sum_{t=T+1}^{T'}\gamma^t\rightarrow 0~~\text{as}~\min\{T,T'\}\to\infty,
     \end{align*} 
     hence $(G_{i,T})_{T\in\bN}$ is a Cauchy sequence with respect to $||\cdot||_p$. The associated $L_p$-space is complete, hence there is a random variable $\tilde G_i$ in $L_p$, that is $||\tilde G_i||_p<\infty$, such that $||G_{i,T}-\tilde G_i||_p\to 0$. This mode of convergence also implies $G_{i,T}\to \tilde G_i$ in probability. The almost sure convergence $G_{i,T}\to G_i$ implies that $G_{i,T}\to G_i$ in probability. Limits with respect to convergence in probability are almost surely unique, hence $\tilde G_i=G_i$ with probability one and $||G_i||_p = ||\tilde G_i||_p <\infty$.
\end{proof}

\subsection{Regularly varying tails: Proof of Theorem~\ref{thm:regulary_varying_tails}}\label{sec:regularly_varying_tails}

Our second main theorem is the result about regular variation of fixed points of $\cT$, see Theorem~\ref{thm:regulary_varying_tails}. As noted before, our proof makes use of the well-developed theory for multivariate fixed point equations 
\begin{equation}\label{eq:multivariate2}
	\bbG\ed \bbJ\bbG + \bbR,~~~\text{$\bbG$ and $(\bbJ,\bbR)$ independent}.
\end{equation}
In particular, a notion of regular variation for \emph{multivariate} distributions, extending the one-dimensional case, has been explored in the context of equations such as \eqref{eq:multivariate2} under various assumptions on the joint law $\cL(\bbJ,\bbR)\in\sP(\bR^{d\times d}\times\bR^d)$. Below, we use the coupling approach explained in~Section~\ref{sec:multivariate-fixed-point-equations} to apply some of the available results in proving Theorem~\ref{thm:regulary_varying_tails}.\\
First we introduce the multivariate notion of regular variation in random vectors and state some facts we use in the proof. We refer the reader to Appendix~C in \cite{buraczewski2016stochastic} for more details on the basic properties of multivariate regular variation. Let $\bar\bR=\bR\cup\{-\infty,\infty\}$ be the extended real numbers and let $\bar\bR^d_{0} = \bar\bR^d\setminus\{0\}$. A $\bR^d$-valued random vector $\bbX$ is called \emph{regularly varying} if there exists a non-null Radon measure $\mu$ on $\bar\bR^d_{0}$, that does not charge infinite points, such that
\begin{equation}\label{eq:multregvag}
\lim_{x\rightarrow\infty}\frac{\bP\big[x^{-1}\bbX \in C\big]}{\bP[|\bbX|>x]} = \mu(C)~~~\text{for every measurable $\mu$-continuity set $C\subseteq\bar\bR^d_0$},
\end{equation}
note that $|\cdot|$ denotes the euclidean norm on $\bR^d$. The measure $\mu$ is called the \emph{limit measure} of regular variation of the random vector $\bbX$. The following are well-known facts about regularly varying random vectors, again we refer to Appendix~C in \cite{buraczewski2016stochastic}
\begin{itemize}
	\item If $\mu$ is the limit measure of regular variation of a random vector $\bbX$, then there is a unique $\alpha>0$, called the \emph{index of regular variation}, such that $\mu(tC) = t^{-\alpha}\mu(C)$ for every $\mu$-continuity set $C$ and $t>0$.
    \item If \eqref{eq:multregvag} is satisfied with index of regular variation $\alpha>0$, then the set $C=\{\bbx\in\bR^d|~|\bbx|>1\}$ is a $\mu$-continuity set and hence, because $\{x^{-1}\bbX\in C\} = \{|\bbX|>x\}$, it holds $\mu(C)=1$ and 
    $$x\mapsto \bP[|\bbX|>x]$$ 
    is a regularly varying function with index $-\alpha$. Further, the set $\{\bbx\in\bR^d|\bbw^T\bbx>1\}$ is a $\mu$-continuity set for each $\bbw\in\bR^d$,
	\item $\bbX$ is regularly varying with index $\alpha>0$ if and only if there exists a distribution $\Xi\in\sP(\bS^{d-1})$ on the unit sphere $\bS^{d-1}=\{\bbx\in\bR^d|~|\bbx|=1\}$ such that for every $t>0$
	\begin{equation}
		\frac{\bP[|\bbX|>tx, \frac{\bbX}{|\bbX|}\in\cdot~]}{\bP[|\bbX|>x]} \overset{w}{\rightarrow} t^{-\alpha}\Xi(\cdot)~~~~\text{as $x\rightarrow\infty$},
	\end{equation}
	where $\overset{w}{\rightarrow}$ is weak convergence of finite measures.
\end{itemize}

The following theorem can be obtained easily as a special case of Theorem~4.4.24 in \cite{buraczewski2016stochastic}, note that the empty product of (random) matrices is the identity matrix.

\begin{theorem}\label{thm:regulary_varying_tails_multivariate_general}
    Let $(R_1,J_1),\dots,(R_d,J_d)$ be a coupling, $\bbR=(R_1,\dots,R_d)^T$ and $\bbJ = (\bbJ_{ij})_{ij}$ with $\bbJ_{ij} = \gamma 1(J_i=j)$, see~Section~\ref{sec:multivariate-fixed-point-equations}. Suppose $\bbR$ is regularly varying with index $\alpha>0$ and limit measure $\mu_{\bbR}$. Then the multivariate distributional fixed point equation \eqref{eq:multivariate2} has a unique solution $\bbG$ which is regularly varying with index $\alpha$. In particular, with $(\bbJ^{(s)})_{s\in\bN_0}$ being iid copies of $\bbJ$, it holds 
	\begin{equation}\label{eq:formula}
    	\lim_{x\rightarrow\infty}\frac{\bP[x^{-1}\bbG\in C]}{\bP[|\bbR|>x]} = \sum_{t=0}^{\infty}\bE\Big[\mu_{\bbR}\Big(\Big\{~\bbx\in\bR^d~\Big|~\Big(\prod\nolimits_{s=0}^{t-1}\bbJ^{(s)}\Big)\bbx\in C~\Big\}\Big)\Big]
	\end{equation}
	for every $\mu_{\bbR}$-continuity set $C$ and 
    \begin{equation}\label{eq:asymptotic}
        \bP[|\bbG|>x] \sim c\cdot \bP[|\bbR|>x]~\text{as}~x\to\infty,
    \end{equation}
    where $c>0$ equals the right hand side of \eqref{eq:formula} evaluated at $C=\{\bbx\in\bR^d~|~|\bbx|>1\}$.
\end{theorem}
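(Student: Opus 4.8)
The plan is to deduce the statement from the general Theorem~4.4.24 of \cite{buraczewski2016stochastic} by checking that the joint law $\cL(\bbJ,\bbR)$ arising from the coupling meets its hypotheses, and then simplifying the resulting limit‑measure formula. Set $\Pi_t := \prod_{s=0}^{t-1}\bbJ^{(s)}$, with the empty product being the identity. The structural fact that drives everything is that each $\bbJ^{(s)}$ is $\gamma$ times a $0$--$1$ matrix with exactly one $1$ per row, so $\Pi_t$ is $\gamma^t$ times such a matrix (it encodes the composition of the corresponding maps $[d]\to[d]$), and hence $\gamma^t\le|\Pi_t|\le\gamma^t\sqrt d$ for every $t$ and every realization, where $|\cdot|$ is the operator norm.

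First I would verify the hypotheses. (a) $\bbR$ is regularly varying with index $\alpha>0$ by assumption; in particular $\bE[|\bbR|^\beta]<\infty$ for every $\beta<\alpha$, so $\bE[\log^+|\bbR|]<\infty$. (b) Since $|\bbJ|\le\gamma\sqrt d$ is almost surely bounded, \emph{every} moment $\bE[|\bbJ|^s]$ is finite, so the integrability and moment conditions imposed by Theorem~4.4.24 hold trivially, and moreover $\sum_{t\ge0}\bE[|\Pi_t|^\alpha]\le d^{\alpha/2}\sum_{t\ge0}\gamma^{\alpha t}<\infty$. (c) Because $|\Pi_t|\le\gamma^t\sqrt d\to0$, the top Lyapunov exponent of $\bbJ$ equals $\log\gamma<0$. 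Hence, by Corollary~\ref{cor:multivariate} (equivalently by the results cited around it), the equation \eqref{eq:multivariate2} has a unique solution, given by the law of the almost surely absolutely convergent series $\bbG=\sum_{t=0}^{\infty}\Pi_t\bbR^{(t)}$; after the obvious reindexing of the iid sequence $(\bbJ^{(t)},\bbR^{(t)})_t$ this is precisely the stationary solution treated in Theorem~4.4.24 (within‑step dependence of $\bbJ^{(t)}$ and $\bbR^{(t)}$ is permitted there, and no invertibility of the matrices is required in this ``$\bbR$ dominates the tail'' regime).

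Applying Theorem~4.4.24 then gives that $\bbG$ is regularly varying with index $\alpha$ and that its limit measure, normalized by the tail $\bP[|\bbR|>x]$ used there, equals $\sum_{t\ge0}\bE\big[\mu_{\bbR}\circ\Pi_t^{-1}\big]$; unfolding this yields \eqref{eq:formula}, the $t=0$ term being $\mu_{\bbR}(C)$ since $\Pi_0$ is the identity. For \eqref{eq:asymptotic}, put $C_0=\{\bbx\in\bR^d:|\bbx|>1\}$; by the facts recalled before the theorem $C_0$ is a $\mu_{\bbR}$‑continuity set with $\mu_{\bbR}(C_0)=1$, and $\{x^{-1}\bbG\in C_0\}=\{|\bbG|>x\}$. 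Evaluating \eqref{eq:formula} at $C_0$ gives $\bP[|\bbG|>x]\sim c\,\bP[|\bbR|>x]$ with $c=\sum_{t\ge0}\bE[\mu_{\bbR}(\{\bbx:|\Pi_t\bbx|>1\})]$: the $t=0$ term equals $\mu_{\bbR}(C_0)=1$, so $c\ge1>0$; and $|\Pi_t\bbx|>1$ forces $|\bbx|>\gamma^{-t}d^{-1/2}$, so by $\alpha$‑homogeneity of $\mu_{\bbR}$ each term is at most $\mu_{\bbR}(\{|\bbx|>\gamma^{-t}d^{-1/2}\})=(\gamma^t\sqrt d)^{\alpha}$, whence $c\le d^{\alpha/2}/(1-\gamma^{\alpha})<\infty$. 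Thus $c\in(0,\infty)$ and \eqref{eq:asymptotic} holds.

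The only real work is bookkeeping: translating the hypotheses and conclusion of Theorem~4.4.24 --- stated for a forward recursion with a general iid matrix/innovation sequence --- into the present notation, and checking that its moment assumptions collapse because $|\bbJ|$ is almost surely bounded and that no invertibility hypothesis is needed. There is no genuine analytic obstacle, since here the ``contraction'' $|\Pi_t|\le\gamma^t\sqrt d$ is deterministic and geometric.
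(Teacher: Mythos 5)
Your proposal is correct and takes essentially the same route as the paper: both reduce the statement to Theorem~4.4.24 of \cite{buraczewski2016stochastic}, note that its moment hypotheses are automatic from the special form $\bbJ_{ij}=\gamma 1(J_i=j)$, and deduce \eqref{eq:asymptotic} from \eqref{eq:formula} by evaluating at the $\mu_{\bbR}$-continuity set $\{\bbx\in\bR^d: |\bbx|>1\}$. The differences are only cosmetic: you work with the bounds $\gamma^t\le\big|\prod_{s=0}^{t-1}\bbJ^{(s)}\big|\le\gamma^t\sqrt{d}$ and verify $c\in(0,\infty)$ explicitly, while the paper checks the conditions $\bE[|\bbJ|^{\alpha}]<1$ and $\bE[|\bbJ|^{\alpha+\delta}]<\infty$ directly from $|\bbJ|=\gamma$ and leaves the positivity and finiteness of $c$ implicit.
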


\begin{proof}
    Since $C = \{\bbx\in\bR^d~|~|\bbx|>1\}$ is a $\mu$-continuity set that satisfies $\bP[|\bbG|>x] = \bP[x^{-1}|\bbG|\in C]$ the asymptotic expression \eqref{eq:asymptotic} follows from \eqref{eq:formula}. The remaining part of the theorem is a direct consequence of~Theorem~4.4.24 from \cite{buraczewski2016stochastic} by noticing that the additional necessary assumptions stated in that theorem, $\bE[|\bbJ|^{\alpha}]<1$ and $\bE[|\bbJ|^{\alpha+\delta}]<\infty$ for some $\alpha>0$ and $\delta>0$, are satisfied in our situation of interest as $\bbJ_{ij} = \gamma 1(J_i=j)$ and hence $|\bbJ|=\gamma<1$.
\end{proof}

In Theorem~\ref{thm:regulary_varying_tails_multivariate_general} an arbitrary coupling $(R_1,J_1),\dots,(R_d,J_d)$ is considered and the (multivariate) regular variation of $\bbR=(R_1,\dots,R_d)^T$, which is an assumption there, transfers over to the (multivariate) regular variation of $\bbG=(G_1,\dots,G_d)^T$, whose marginal laws are the solutions to the distributional Bellman equations of interest in Theorem~\ref{thm:regulary_varying_tails}. Moreover, multivariate regular variation of $\bbG$ can be used to show univariate regular variation of $G_1,\dots,G_d$ by testing \eqref{eq:multregvag} on sets of the form $C=\{\bbx\in\bR^d~|~\pm \bbe_j^T\bbx>1\}$ with $\bbe_j\in\bR^d$ being unit vectors. Note that Theorem~\ref{thm:regulary_varying_tails} concerns solutions of the distributional Bellman~operator $\cT$. Hence, the result of that theorem only depends on the marginal laws $(\cL(R_1,J_1),\dots,\cL(R_d,J_d))$ and hence Theorem~\ref{thm:regulary_varying_tails_multivariate_general}, which assumes presence of a coupling satisfying certain properties, is \emph{not} directly applicable there. But, as explained in Section~\ref{sec:multivariate-fixed-point-equations}, results about coupled situations can become useful by choosing a convenient coupling $\cL((R_1,J_1),\dots,(R_d,J_d))$ that makes certain arguments work out. In the following proof this will be the case when an independence coupling is considered: independence of $R_1,\dots,R_d$ together with the assumptions of Theorem~\ref{thm:regulary_varying_tails} implies that the random vector $\bbR=(R_1,\dots,R_d)^T$ is regularly varying in the multivariate sense, so Theorem~\ref{thm:regulary_varying_tails_multivariate_general} becomes applicable. We now present technical details:

\begin{proof}[Proof of Theorem~\ref{thm:regulary_varying_tails}]
    Let $(R_1,J_1),\dots,(R_d,J_d)$ be independent. Let $\bbe_j\in\bR^d$ be the $j$-th unit vector and
    $$\bbR = (R_1,\dots,R_d)^T = \sum_{j=1}^d R_j\cdot\bbe_j,$$
    that is we express $\bbR$ as a sum of independent random vectors. In the following, we rely on well-established results from the theory of (multivariate) regular variation which show that regular variation is transferred to sums of \emph{independent} random variables (vectors) each having this property.\\
    First, we determine the asymptotic behavior of $\bP[|\bbR|>x]$ as $x\to\infty$: let $i\in[d]$ be fixed. Only those pairs $(R_j,J_j)$ with $i\rightarrow j$ are relevant for the distribution of $G_i$, hence we assume w.l.o.g. that $i\rightarrow j$ holds for all $j$, that is for every $j\in[d]$ there exists constants $q_j\in[0,1]$ and $c_j\in[0,\infty)$ such that
	\begin{equation}\label{eq:individual}
		\lim\limits_{x\rightarrow\infty}\frac{\bP[R_j>x]}{x^{-\alpha} L(x)} = q_jc_j~~~\text{and}~~~\lim\limits_{x\rightarrow\infty}\frac{\bP[R_j<-x]}{x^{-\alpha} L(x)} = (1-q_j)c_j,
	\end{equation}
	where $\alpha>0$ and $L$ is a slowly varying function. This implies for every $j$
	$$c_j = \lim\limits_{x\rightarrow\infty}\frac{\bP[|R_j|>x]}{x^{-\alpha} L(x)} = \lim\limits_{x\rightarrow\infty}\frac{\bP[R_j^2>x^2]}{x^{-\alpha} L(x)} = \lim\limits_{y\rightarrow\infty}\frac{\bP[R_j^2>y]}{y^{-\alpha/2} \tilde L(y)}~~~\text{with}~~~\tilde L(y) = L(\sqrt{y}).$$
    The function $\tilde L$ is slowly varying which shows that $R_j^2$ is regularly varying with index $\alpha/2$ in case $c_j>0$ and $\bP[R_j^2>x] = o(x^{-\alpha/2})$ as $x\to\infty$ in case $c_j=0$. In such a situation, and since $R_1^2,\dots,R^2_d$ are independent by choice of coupling, standard results from the theory of regular variation apply, see Section~1.3.1 in \cite{mikosch1999regular}, and show
    \begin{align*}
        \bP[R_1^2+\dots+R_d^2>y] &\sim \bP[R_1^2>y]+\dots+\bP[R_d^2>y]~~\text{as}~y\to\infty
    \end{align*}
    and hence, by re-substituting $x^2$ for $y$,  
    \begin{align*}
        \bP[R_1^2+\dots+R_d^2>x^2] &\sim \bP[R_1^2>x^2]+\dots+\bP[R_d^2>x^2] \sim (c_1+\dots+c_d)x^{-\alpha}L(x)~~\text{as}~x\to\infty,
    \end{align*}
    that is 
    \begin{equation}\label{eq:betrag}
	   \lim_{x\to\infty}\frac{\bP[|\bbR|>x]}{x^{-\alpha} L(x)} = c_1+\dots+c_d.
	\end{equation}
    Next, we give an argument to show that $\bbR$ is regularly varying with index $\alpha$ (in the multivariate sense): let $\sgn(R_j)\in\{-1,1\}$ be the sign of $R_j$, so that $|R_j\bbe_j|=|R_j|$ and $\frac{R_j\bbe_j}{|R_j\bbe_j|} = \sgn(R_j)\bbe_j$ in case $R_j\neq 0$. If $c_j>0$ then $R_j\bbe_j$ is regularly varying at index $\alpha>0$ in the multivariate sense, the spectral measure $\Xi_j\in\sP(\bS^{d-1})$ is given by
	$$\Xi_j = q_j\delta_{\bbe_j} + (1-q_j)\delta_{-\bbe_j} = \lim_{x\rightarrow\infty}\bP[\sgn(R_j)\bbe_j\in\cdot~||R_j|>x].$$
	Applying standard arguments, see Lemma~C.3.1 in \cite{buraczewski2016stochastic}, it is easily seen that $\bbR = \sum_{j=1}^dR_j\bbe_j$ (a sum of independent random vectors) is regularly varying at index $\alpha$. Let $\mu_{\bbR}$ be the associated measure. We are now able to apply~Theorem~\ref{thm:regulary_varying_tails_multivariate_general} to the solution $\bbG=(G_1,\dots,G_d)^T$ of the distributional equation \eqref{eq:multivariate2}.\\
    First, we evaluate the limit measure $\mu_{\bbR}$ at some sets of interest: Combining \eqref{eq:individual} and \eqref{eq:betrag}  yields for every $j\in[d]$
	\begin{equation}
		\mu_{\bbR}\Big(\big\{\bbx\in\bR^d~|~\bbe_j^T\bbx>1\big\}\Big) = \lim_{x\rightarrow\infty}\frac{\bP[R_j>x]}{\bP[|\bbR|>x]} = \frac{q_jc_j}{c_1+\dots+c_d}.
	\end{equation}
    As in Theorem~\ref{thm:regulary_varying_tails_multivariate_general}, let $(\bbJ^{(s)})_{s\in\bN_0}$ be a sequence of iid copies of $\bbJ$. For every $t\in\bN_0$ the entry of the random matrix $\prod_{s=0}^{t-1}\bbJ^{(s)}$ at position $(i,j)\in[d]^2$ equals $\gamma^t \cdot 1(I_{it}=j)$, where $(I_{it})_{i\in[d],t\in\bN_0}$ are $[d]$-valued random variables such that for every $i\in[d]$ the sequence $(I_{it})_{t}$ has the same law as $(I_t)_{t}$ under $\bP[~\cdot~|I_0=i]$.\\
	By~Theorem~\ref{thm:regulary_varying_tails_multivariate_general} for every $i\in[d]$ it holds that
	\begin{equation}
		\lim\limits_{x\rightarrow\infty}\frac{\bP[G_i>x]}{\bP[|\bbR|>x]} = \sum_{t=0}^{\infty}\bE\Big[\mu_{\bbR}\Big(\big\{\bbx\in\bR^d~\big|~\bbe_i^T\Big(\prod\nolimits_{s=0}^{t-1}\bbJ^{(s)}\Big)\bbx > 1\big\}\Big)\Big].
	\end{equation}
	Using $\mu_{\bbR}(tC)=t^{-\alpha}\mu_{\bbR}(C)$ for every $t\in\bN_0$ it holds
	\begin{align*}
		\bE\Big[\mu_{\bbR}\Big(\big\{\bbx\in\bR^d~\big|~\bbe_i^T\Big(\prod\nolimits_{s=0}^{t-1}\bbJ^{(s)}\Big)\bbx > 1\big\}\Big)\Big] &= \bE\Big[\mu_{\bbR}\Big(\big\{\bbx\in\bR^d~\big|~\gamma^t \bbe_{I_{it}}^T\bbx > 1\big\}\Big)\Big]\\
		&=\gamma^{\alpha t}\sum_{j\in[d]}\bP[I_t=j|I_0=i] \mu_{\bbR}\Big(\big\{\bbx\in\bR^d~|~\bbe_j^T\bbx>1\big\}\Big)\\
		&=\gamma^{\alpha t}\sum_{j\in[d]}\bP[I_t=j|I_0=i]\frac{q_jc_j}{c_1+\dots+c_d}
	\end{align*}
	and hence
	\begin{align*}
		\lim\limits_{x\rightarrow\infty}\frac{\bP[G_i>x]}{\bP[|\bbR|>x]} &= \sum_{t=0}^{\infty}\gamma^{\alpha t}\sum_{j\in[d]}\bP[I_t=j|I_0=i]\frac{q_jc_j}{c_1+\dots+c_d}\\
		&=\frac{1}{1-\gamma^{\alpha}}\sum_{j\in[d]}\bP[I_N=j|I_0=i]\frac{q_jc_j}{c_1+\dots+c_d},
	\end{align*}
	where $N\sim\geometric(\gamma^{\alpha})$ is independent of $(I_t)_t$. With $w_{ij} = \bP[I_N=j|I_0=i]$ and \eqref{eq:betrag} we have
	\begin{equation}
		\lim\limits_{x\rightarrow\infty}\frac{\bP[G_i>x]}{x^{-\alpha} L(x)} = \frac{\sum_{j\in[d]}w_{ij}q_jc_j}{1-\gamma^{\alpha}}.
	\end{equation}
	The tail asymptotic of $\frac{\bP[G_i<-x]}{x^{-\alpha}L\left(x\right)}$ can be calculated the same way using
	\begin{equation}
		\mu_{\bbR}\Big(\big\{\bbx\in\bR^d~|~\bbe_j^T\bbx < -1\big\}\Big) = \lim_{x\rightarrow\infty}\frac{\bP[R_j<-x]}{\bP[|\bbR|>x]} = \frac{(1-q_j)c_j}{c_1+\dots+c_d}.
	\end{equation}
	To see that $G_i$ is regularly varying with index $\alpha$ note that $\bP[|G_i|>x] = \bP[G_i>x]+\bP[G_i<-x]\sim (\sum_{j=1}^dw_{ij}c_j)x^{-\alpha}L(x)$ is a regularly varying function of index $\alpha$ and the balance equation hold:
	\begin{align*}
		\lim\limits_{x\rightarrow\infty}\frac{\bP[G_i>x]}{\bP[|G_i|>x]} &= \frac{\sum_{j=1}^dq_jw_{ij}c_j}{\sum_{j=1}^dw_{ij}c_j}~~\text{and}\nonumber\\
\lim\limits_{x\rightarrow\infty}\frac{\bP[G_i<-x]}{\bP[|G_i|>x]} &= \frac{\sum_{j=1}^d(1-q_j)w_{ij}c_j}{\sum_{j=1}^dw_{ij}c_j} = 1 - \frac{\sum_{j=1}^dq_jw_{ij}c_j}{\sum_{j=1}^dw_{ij}c_j}.
	\end{align*}
\end{proof}

\section*{Acknowledgments}
We thank three anonymous referees for their careful reading, many constructive remarks and suggestions, as well as for pointing to the related literature \cite{morimura2010nonparametric,chung1987discounted,als_buck}. The referees' comments led to a significant improvement of the manuscript.

The first author was partially supported by the Deutsche Forschungsgemeinschaft (DFG, German Research Foundation) - 502386356.

\bibliographystyle{bibstylefile}
\bibliography{on_solutions_to_distributional_bellmann_equations_arxiv}

\end{document}